\tikzset{auto, >=stealth}
\tikzset{every edge/.append style={shorten >= 1pt}}
\tikzset{main node/.style={circle,draw,minimum size=1cm,inner sep=0pt},
}
\newtcbox{\remind}{%
  enhanced jigsaw,nobeforeafter,size=fbox,sharp corners,
  shrink tight,
  extrude by=3pt,
  tcbox raise base,
  borderline={0.5pt}{-1pt}{red,opacity=0.75},
  opacityframe=0.75,
  opacityback=0.5,
}
\newcommand{\argmin}{\mathrm{arg}\min}
\newcommand{\set}[1]{\{ #1 \}}
\newcommand{\methodA}{JIRP}
\newcommand{\Office}{Office World Scenario}
\newcommand{\office}{office world scenario}
\newcommand{\OfficeA}{Task 1}
\newcommand{\OfficeB}{Task 2}
\newcommand{\OfficeC}{Task 3}
\newcommand{\craft}{Minecraft world scenario}
\newcommand{\qValue}{q}
\newcommand{\implement}{encode}
\newcommand{\algoName}{JIRP} 
\newcommand{\newAlgoName}{AFRAI-RL}
\newcommand{\init}{I}
\newcommand{\dfaStates}{V}
\newcommand{\dfaState}{v}
\newcommand{\dfaTransition}{\delta}
\newcommand{\dfaInputAlphabet}{\Sigma}
\newcommand{\dfaFinalStates}{F}
\newcommand{\dfa}{\mathfrak A}
\newcommand{\mdp}{\mathcal{M}} 
\newcommand{\mdpStates}{X}
\newcommand{\mdpState}{x}
\newcommand{\mdpInit}{\mdpState_{init}}
\newcommand{\mdpActions}{A}
\newcommand{\mdpAction}{a}
\newcommand{\mdpProb}{p}
\newcommand{\mdpRewardFunction}{R}
\newcommand{\mdpLabel}{\ell}
\newcommand{\mdpRewards}{r}
\newcommand{\trajectory}[1]{\ensuremath{\mdpState_0 \mdpAction_0\ldots \mdpState_{#1} \mdpAction_{#1} \mdpState_{#1 + 1}}}
\newcommand{\rmLabels}{\mathcal P}
\newcommand{\rmLabelingFunction}{L}
\newcommand{\rmInputAlphabet}{2^\rmLabels}
\newcommand{\rmOutputAlphabet}{\mathcal{R}}
\newcommand{\machine}{\mathcal A}
\newcommand{\mealyStates}{W}
\newcommand{\mealyState}{w}
\newcommand{\mealyInputAlphabet}{\ensuremath{{2^\mathcal{P}}}}
\newcommand{\mealyInit}{{\mealyState_{init}}}
\newcommand{\mealyOutputAlphabet}{\ensuremath{\mathcal{R}}}
\newcommand{\mealyOutput}{\eta}
\newcommand{\mealyTransition}{\delta}
\newcommand{\inputTrace}{\lambda}
\newcommand{\outputTrace}{\rho}
\begin{document} 
\title{Active Finite Reward Automaton Inference and Reinforcement Learning Using Queries and Counterexamples}
 \author{
	Zhe Xu\inst{1}, Bo Wu\inst{2}, Aditya Ojha\inst{2}, Daniel Neider\inst{3}, Ufuk Topcu\inst{2} \\
% 	\textsuperscript{1}{University of Texas at
% 		Austin, Austin, TX}\\
% 	\textsuperscript{2}{Max Planck Institute for Software Systems,  Kaiserslautern, Germany}\\
% 	\{zhexu, bwu3\}@utexas.edu, \{neider\}@mpi-sws.org, utopcu@utexas.edu
% 	\textsuperscript{3}{Arizona State University, Phoenix, AR}\\
% 	\affiliations
% 	University of Texas at Austin
% 	\emails
% 	\{zhexu, utopcu\}@utexas.edu
}
\institute{
Arizona State University, Tempe AZ, USA
\email{xzhe1@asu.edu} \and
University of Texas at Austin, Austin TX, USA \email{bowu86@gmail.com, ~~~~~~~~~\{adiojha629, utopcu\}@utexas.edu}\and
Max Planck Institute for Software Systems, Kaiserslautern, Germany \email{neider@mpi-sws.org} 
}

\maketitle
\begin{abstract} 
Despite the fact that deep reinforcement learning (RL) has surpassed human-level performances in various tasks, it still has several fundamental challenges. First, most RL methods require intensive data from the exploration of the environment to achieve satisfactory performance. Second, the use of neural networks in RL renders it hard to interpret the internals of the system in a way that humans can understand. To address these two challenges, we propose a framework that enables an RL agent to reason over its exploration process and distill high-level knowledge for effectively guiding its future explorations. Specifically, we propose a novel RL algorithm that learns high-level knowledge in the form of a \textit{finite reward automaton} by using the L* learning algorithm. We prove that in episodic RL, a finite reward automaton can express any non-Markovian bounded reward functions with finitely many reward values and approximate any non-Markovian bounded reward function (with infinitely many reward values) with arbitrary precision. We also provide a lower bound for the episode length such that the proposed RL approach almost surely converges to an optimal policy in the limit. We test this approach on two RL environments with non-Markovian reward functions, choosing a variety of tasks with increasing complexity for each environment. We compare our algorithm with the state-of-the-art RL algorithms for non-Markovian reward functions, such as Joint Inference of Reward machines and Policies for RL (JIRP), Learning Reward Machine (LRM), and Proximal Policy Optimization (PPO2). Our results show that our algorithm converges to an optimal policy faster than other baseline methods. 
\end{abstract}

% We propose an active learning approach whose results are more interpretable because it learns high-level knowledge in the form of a \textit{finite reward automaton} which can be converted to a deterministic finite state machine, and further translated to a regular expression.

\section{Introduction}

Despite the fact that deep reinforcement learning (RL) has surpassed human-level
performances in various tasks, it still has several fundamental challenges. First, most RL methods require intensive data from the exploration of the environment to achieve satisfactory performance. Second, the use of neural networks in RL renders it hard to interpret the internals of the system in a way that humans can understand \cite{HOLZINGER202128}. 

% An RL approach that distill knowledge from its exploration process
%The method proposed in this paper requires less data exploration than other baselines algorithms (see Results) and is more interpretable, as we discuss in the following paragraphs. We investigate an RL problem with non-Markovian reward functions. 

To address these two challenges, we propose a framework that enables an RL agent to reason over its exploration process and distill high-level knowledge for effectively guiding its future explorations.	
Specifically, we learn high-level knowledge in the form of \textit{finite reward automata}, a type of Mealy machine that encodes non-Markovian
reward functions. The finite reward automata can be converted to a deterministic finite state machine, allowing a practitioner to more easily reason about what the agent is learning \cite{Hopcroft2006}. Thus, this representation is more interpretable than frameworks that use neural networks.

In comparison with other methods that also learn high-level knowledge during RL, the one proposed in this paper \textit{actively} infers a finite reward automaton from the RL episodes. We prove that in episodic RL, a finite reward automaton can express any non-Markovian bounded reward functions with finitely many reward values and approximate any non-Markovian bounded reward function (with infinitely many reward values) with arbitrary precision. As the learning agent infers this finite reward automaton during RL, it also performs RL (specifically, q-learning) to maximize its obtained rewards based on the inferred finite reward automaton. The inference method is inspired by the L* learning algorithm \cite{angluin1987learning}, and modified to the framework of RL. We maintain two q-functions, one for incentivizing the learning agent to answer the \textit{membership queries} during the explorations and the other one for obtaining optimal policies for the inferred finite reward automaton (in order to answer the \textit{equivalence queries}). Furthermore, we prove that the proposed RL approach almost surely converges to an optimal policy in the limit, if the episode length is longer than a theoretical lower bound value.

% to stress the utility of our method we prove that finite reward automata can express non-Markovian reward functions which have a finite number of rewards, and that our method converges to an optimal policy in the limit.

%In addition, our work differs from other work with finite reward automata, in that we show how expressive this automata are. We also prove that our solution will converge to an optimal solution for non-MDP environments, while other methods such as \cite{Icarte2018} only show convergence if a perfect reward machine exists.

%\zhe{Add 1 paragraph about expressivity and convergence claims here}

We implement the proposed approach and three baseline
methods (JIRP-SAT \cite{Xu2019jirp}, LRM-QRM \cite{Icarte2018}, and PPO2 \cite{schulman2017proximal}) in the Office world \cite{Xu2019jirp} and Minecraft world \cite{andreas2017modular} scenarios. The results show that, at worst the approach converges to an optimal policy 88.8\% faster than any of the baselines, and at best the approach converges while the other baselines do not.

\subsection{Related works} 
% \textcolor{black}{Many hierarchical RL approaches \cite{sutton1999between,parr1998HAM,Dietterich2000MaxQ} outperform standard RL approaches. The use of deep neural networks in hierarchical RL \cite{Kulkarni2016hierarchical} has enabled the approach to outperform typical deep RL approaches. As hierarchical RL approaches can potentially prune optimal
% policies, they cannot guarantee convergence to optimal policies. }
	
Our work is closely related to the use of formal methods in RL, such as RL for finite reward automata \cite{DBLP:conf/icml/IcarteKVM18} and RL with temporal logic specifications \cite{Aksaray2016,Li2017,Icarte2018,Fu2014ProbablyAC,Min2017,Alshiekh2018SafeRL}. The current methods assume that high-level knowledge, in the form of reward machines or temporal logic specifications, is known \textit{a priori}. However, in real-life use cases, such knowledge is implicit and must be inferred from data.

%mostly rely on the assumption that the high-level knowledge (i.e., reward machines, temporal logic specifications) are known \textit{a priori}, while in reality they are often implicit and need to be inferred from data.

Towards the end of inferring high-level knowledge, several approaches have been proposed \cite{zhe_ijcai2019}, \cite{Xu2019jirp}, \cite{IcarteNIPS2019}, \cite{Neider2020AdviceGuidedRL}. In these methods, the agent jointly learns the high-level knowledge \textit{and} RL-policies \textit{concurrently}. In \cite{zhe_ijcai2019}, the inferred high-level knowledge is represented by \textit{temporal logic} formulas and used for RL-based transfer learning. The methods for inferring temporal logic formulas from data can be found in \cite{Hoxha2017,ATVA2021,Kong2017TAC,Bombara2016,Nasim2021,Neider,zhe_advisory,zhe2016,VazquezChanlatte2018LearningTS,zhe_info,NIPS2018Shah,zheADHS,zhe_info,zheletter2,cai2021modular,Furelos-Blanco_Law_Russo_Broda_Jonsson_2020}.

%For scenarios where the high-level knowledge are not known \textit{a priori}, approaches that jointly learn the high-level knowledge and the policies during RL have been recently proposed by \cite{zhe_ijcai2019}, \cite{Xu2019jirp} and \cite{IcarteNIPS2019}.

In comparison with temporal logic formulas, the finite reward automata used in this paper are more expressive in representing the high-level structural relationships. Moreover, even if the inferred finite reward automaton is incorrect during the first training loop, the agent is still able to self-correct and learn more complex tasks. In order to learn finite reward automata, the authors in \cite{Xu2019jirp} proposed using \textit{passive} inference of finite reward automata and utilizing the inferred finite reward automata to expedite RL. In \cite{IcarteNIPS2019}, the authors proposed a method to infer reward machines to represent the memories of Partially observable Markov decision processes (POMDP) and perform RL for the POMDP with the inferred reward machines.

In contrast, our method \textit{actively} infers the finite reward automaton in environments with non-Markovian reward functions. The active inference is facilitated by L* learning. This algorithm assumes the existence of a teacher who can answer the membership and equivalence queries \cite{angluin1987learning,wu2015counterexample,wu2018permissive,zhang2017supervisor}. In our approach, an RL engine fulfills the role of the teacher, and the queries are answered through interaction with the environment through the RL engine.

%comparison to other active method
%use similar terms to Icaps discussion with Rodrigo
%TODO for Adi:
During the submission of this paper, an interesting method was proposed by the authors of \cite{Gaon_Brafman_2020} which also used L* learning for non-Markovian Rewards. While superficially similar to our work, the two approaches differ in three ways:

(1) The proposed approach in this paper uses finite reward automata, while \cite{Gaon_Brafman_2020} works with deterministic finite automata (DFAs) and general automata. This is a notable distinction because we prove that finite reward automata can express any non-Markovian bounded reward function with finitely many reward values in episodic RL.

(2) The authors of \cite{Gaon_Brafman_2020} use only maintain one type of DFA for answering the equivalence queries. The proposed approach in this paper maintains two types of finite reward automata, using one to answer equivalence queries, and the other to answer membership queries. This additional finite reward automaton can incentivize the agent to answer membership queries during the exploration.

(3) We provide a lower bound for the episode length such that the proposed RL approach almost surely converges to an optimal policy in the limit.

\section{Finite Reward Automata}
\label{sec:preliminaries}
In this section we introduce necessary background on reinforcement learning and finite reward automata.

\subsection{Markov Decision Processes and Finite Reward Automata}
\label{sec_MDP}
Let $\mathcal{M}= (\mdpStates, \mdpInit, \mdpActions, \mdpProb, \rmLabels, \mdpRewardFunction, \rmLabelingFunction)$ be a labeled \textit{Markov decision process} (labeled MDP) with a non-Markovian reward function, where the state space $\mdpStates$ and action set $\mdpActions$ are finite, $\mdpInit\in\mdpStates$ is a set of initial states, $\mdpProb: \mdpStates\times\mdpActions\times\mdpStates\rightarrow[0,1]$ is a probabilistic transition relation, $\rmLabels$ is a set of propositional variables (i.e., labels), $\mdpRewardFunction:(2^\rmLabels)^+\rightarrow\rmOutputAlphabet$ is a non-Markovian reward function where the reward depends on the history of the propositional variables (i.e., labels) that has been encountered, and $\rmLabelingFunction: \mdpStates\times \mdpActions\times \mdpStates\rightarrow 2^\rmLabels$ is a labeling function.   

We define the size of $\mdp$, denoted as $|\mdp|$, to be $|\mdpStates|$ (i.e., the cardinality of the set $\mdpStates$). A policy $\pi: \mdpStates\times\mdpActions\rightarrow [0,1]$ specifies the probability of taking each action for each state. The \textit{action-value function}, denoted as $q_{\pi}(\mdpState,\mdpAction)$, is the expected discounted reward if an agent applies policy $\pi$ after taking action $\mdpAction$ from state $\mdpState$. A finite sequence $\trajectory{k}$ generated by $\mathcal{M}$ under certain policy $\pi$ is called a \textit{trajectory}, starting from $\mdpState_1=\mdpInit$ and satisfies $\sum_{\mdpAction\in\mdpActions}\pi(\mdpState_k,\mdpAction)P(\mdpState_k,\mdpAction,\mdpState_{k+1})>0$ for all $k\ge1$. Its corresponding \emph{label sequence} is $\mdpLabel_0 \mdpLabel_1\ldots \mdpLabel_{k}$
where $\rmLabelingFunction(\mdpState_i, \mdpAction_{i}, \mdpState_{i+1}) = \mdpLabel_i$ for each $i \leq k$. Similarly, the corresponding \emph{reward sequence} is $\mdpRewards_1\ldots\mdpRewards_k$,
where $\mdpRewards_i = \mdpRewardFunction(\mdpLabel_0 \mdpLabel_1\ldots \mdpLabel_{k})$, for each $i \leq k$. 
We call the pair $(\inputTrace, \outputTrace):=(\mdpLabel_1\ldots\mdpLabel_k,\mdpRewards_1\ldots\mdpRewards_k)$ a \emph{trace}.

\begin{definition}
    \label{AllAttainableSequences}
    Let $\mathcal{M}= (\mdpStates, \mdpInit, \mdpActions, \mdpProb, \rmLabels, \mdpRewardFunction, \rmLabelingFunction)$ be a labeled \textit{Markov decision process}.
    We define a sequence  $(\mdpLabel_1,\mdpRewards_1),\ldots,(\mdpLabel_k,\mdpRewards_k)$ to be attainable if $k\leq eplength$ and $\mdpProb(\mdpState_i, \mdpAction_{i}, \mdpState_{i+1}) > 0$ for each $i \in \{0, \dots ,k\}$.
\end{definition}

\begin{definition}%[Reward Mealy machines] 
\label{def:rewardMealyMachines}
A \emph{finite reward automaton} 
$\machine = (\mealyStates, \mealyInit, \mealyInputAlphabet, \mealyOutputAlphabet, \mealyTransition, \mealyOutput)$ consists of 
a finite, nonempty set $\mealyStates$ of states, 
an initial state $\mealyInit \in \mealyStates$, 
an input alphabet $\mealyInputAlphabet$,
an output alphabet $\mealyOutputAlphabet$, 
a (deterministic) transition function $\mealyTransition \colon \mealyStates \times \mealyInputAlphabet \to \mealyStates$, 
and an output function $\mealyOutput \colon \mealyStates \times \mealyInputAlphabet \to \mealyOutputAlphabet$, where $\mealyOutputAlphabet$ is a finite set of reward values ($\mealyOutputAlphabet\subset\mathbb{R}$).
We define the size of $\machine$, denoted as $|\machine|$, to be $|\mealyStates|$ (i.e., the cardinality of the set $\mealyStates$).
\end{definition}

\begin{remark}
A finite reward automaton is actually a Mealy machine (Shallit 2008) where the output alphabet is a finite set of values. When the output alphabet is an infinite set of values, it is called a reward machine in \cite{Xu2019jirp,DBLP:conf/icml/IcarteKVM18}.
\end{remark}

The run of a finite reward automaton $\machine$ on a sequence of labels $\mdpLabel_1\ldots \mdpLabel_k\in (\mealyInputAlphabet)^*$ is a sequence 
$\mealyState_0 (\mdpLabel_1, \mdpRewards_1) \mealyState_1 (\mdpLabel_2, \mdpRewards_2)\ldots \mealyState_{k-1}(\mdpLabel_k, \mdpRewards_k) \mealyState_{k}$ of states and label-reward pairs such that $\mealyState_0 = \mealyInit$
and for all $i\in\set{0,\ldots, k}$, we have $\mealyTransition(\mealyState_i, \mdpLabel_i) = \mealyState_{i+1}$ and $\mealyOutput(\mealyState_i,\mdpLabel_i) = \mdpRewards_i$.
We write $\machine[\mdpLabel_1\ldots\mdpLabel_k] = \mdpRewards_1\ldots\mdpRewards_k$ 
to connect the input label sequence 
%$\mdpLabel_1\ldots\mdpLabel_k$
to the sequence of rewards
%$\mdpRewards_1\ldots\mdpRewards_k$ 
produced by the machine $\machine$ \cite{Xu2019jirp}. 

% We call $\mdpLabel_1, \mdpLabel_2, \dots, \mdpLabel_k$ a \textit{trace} of finite reward automaton $\machine$. We also write $R(\mdpLabel_1, \mdpLabel_2, \dots, \mdpLabel_k)$ as the reward obtained after reading the input sequence $\mdpLabel_1, \mdpLabel_2, \dots, \mdpLabel_k$. 

\begin{definition}
We define that
a finite reward automaton $\machine$ \emph{\implement s} 
the reward function $\mdpRewardFunction$ of a labeled MDP $\mathcal{M}$
if for every trajectory $\trajectory{k}$ of finite length and the corresponding label sequence $\mdpLabel_1\ldots \mdpLabel_k$, 
%the value of the reward function is captured by the last element of the reward sequence,
%$\mdpRewardFunction(\trajectory{k}) = \mdpRewards_k$.
the reward sequence equals $\machine[\mdpLabel_1\ldots \mdpLabel_k]$.
\end{definition}

\begin{definition}[Reward Product MDP]
Let $\mdp = (\mdpStates, \mdpInit, \mdpActions, \mdpProb, \rmLabels, \mdpRewardFunction, \rmLabelingFunction)$ be a labeled MDP with a non-Markovian reward function and $\machine = (\mealyStates, \mealyInit, \rmInputAlphabet, \rmOutputAlphabet, \mealyTransition, \mealyOutput)$ a finite reward automaton encoding its reward function.
We define the product MDP $\mdp_\machine = (\mdpStates', \mdpState'_{\init}, \mdpActions', \mdpProb', \rmLabels', \mdpRewardFunction', \rmLabelingFunction')$ by 
\begin{itemize}
	\item $\mdpStates' = \mdpStates \times \mealyStates$;
	\item $\mdpState'_{\init} = (\mdpInit, \mealyState_{init})$;
	\item $\mdpActions' = \mdpActions$; 
	\item $\mdpProb' \bigl( (\mdpState, \mealyState), \mdpAction, (\mdpState', \mealyState') \bigr)$\\
	$= \begin{cases}
	\mdpProb(\mdpState, \mdpAction, \mdpState') & \text{if}~\mealyState' = \mealyTransition(\mealyState, \rmLabelingFunction(\mdpState, \mdpAction, \mdpState'));\\
	0 & \text{otherwise};
	\end{cases}
	$
	\item $\rmLabels'=\rmLabels$;
	\item $\mdpRewardFunction' \bigl( (\mdpState, \mealyState), \mdpAction, (\mdpState', \mealyState') \bigr) = \mealyOutput \bigl(\mealyState, \rmLabelingFunction(\mdpState, \mdpAction, \mdpState') \bigr)$; and
	\item $\rmLabelingFunction'= \rmLabelingFunction$.
\end{itemize}
\label{product_MDP}
\end{definition} 
It can be seen from Definition \ref{product_MDP} that the reward function of the product MDP $\mdp_\machine$ is actually Markovian.

\subsection{Reinforcement Learning With Finite Reward Automata}

Q-learning \cite{Watkins1992} is a form of model-free reinforcement learning
(RL). 
Starting from state $\mdpState$, the system selects an action $\mdpAction$, which takes it to state $\mdpState'$ and obtains a reward $R$. The $\mathrm{Q}$-function values will be updated by the following rule:
\begin{align}
q(\mdpState, \mdpAction) \leftarrow (1-\alpha)q(\mdpState, \mdpAction)+\alpha(\mdpRewardFunction+\gamma\max_{\mdpAction}q(\mdpState', \mdpAction)).
\label{Q_update}
\end{align}                           

The q-learning algorithm can be modified to learn an optimal policy when the general reward function is encoded by a finite reward automaton
\cite{DBLP:conf/icml/IcarteKVM18}. Starting from state $(\mdpState, \mealyState)$ in the product space, the system selects an action $\mdpAction$, which takes it to state $(\mdpState',\mealyState')$ and obtains a reward $R$. The $\mathrm{Q}$-function values will be updated by the following rule. 
\begin{align}
q(\mdpState, \mealyState, \mdpAction) \leftarrow (1-\alpha)q(\mdpState, \mealyState, \mdpAction)+\alpha(\mdpRewardFunction+\gamma\max_{\mdpAction}q(\mdpState', \mealyState', \mdpAction)).
\label{Q_update2}
\end{align}  
% Alternatively, we can write (\ref{Q_update2}) as 
% \begin{align}
% q^{\mealyState}(\mdpState, \mdpAction) \leftarrow (1-\alpha)q^{\mealyState}(\mdpState, \mdpAction)+\alpha(\mdpRewardFunction+\gamma\max_{\mdpAction}q^{\mealyState'}(\mdpState',\mdpAction)).
% \label{Q_update2}
% \end{align}  
We consider episodic Q-learning in this paper, and we use $eplength$ to denote the episode length.

\section{Expressivity of Finite Reward Automata}
In this section, we show that any non-Markovian reward function in episodic RL which has finitely many reward values can be encoded by finite reward automata, while any non-Markovian bounded reward function in episodic RL can be approximated by finite reward automata with arbitrary precision.

	\begin{theorem}
	For a labeled MDP $\mdp = (\mdpStates, \mdpInit, \mdpActions, \mdpProb, \rmLabels, \mdpRewardFunction, \rmLabelingFunction)$ with a non-Markovian reward function $\mdpRewardFunction:(2^\rmLabels)^+\rightarrow\rmOutputAlphabet_f$ in episodic RL, where $\rmOutputAlphabet_f$ is a finite set of values in $\mathbb{R}$, there exists at least one finite reward automaton $\machine = (\mealyStates, \mealyInit, \rmInputAlphabet, \rmOutputAlphabet, \mealyTransition, \mealyOutput)$ that can \implement\ the reward function $\mdpRewardFunction$.
	\end{theorem}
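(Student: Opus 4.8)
The plan is to build a ``prefix-tree'' finite reward automaton whose state after reading a label sequence is simply that sequence itself, so that the machine literally remembers the entire history of labels observed so far. The output on each transition is then defined to be exactly the value that $\mdpRewardFunction$ assigns to the resulting history. The crucial observation that keeps this construction finite is the \emph{episodic} assumption: since every admissible trajectory reads at most $eplength$ labels, we only ever need to remember label histories of length at most $eplength$, and there are only finitely many of those because $2^\rmLabels$ is finite.

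Concretely, I would set $\mealyStates = \set{\mdpLabel_1\ldots\mdpLabel_j : 0 \le j \le eplength,\ \mdpLabel_i \in 2^\rmLabels}$, the set of all label words of length at most the episode horizon (including the empty word $\varepsilon$), and take $\mealyInit = \varepsilon$ and $\mealyOutputAlphabet = \rmOutputAlphabet_f$. For a state $\mealyState = \mdpLabel_1\ldots\mdpLabel_j$ with $j < eplength$ and an input symbol $\mdpLabel \in 2^\rmLabels$, I define the transition $\mealyTransition(\mealyState,\mdpLabel) = \mealyState\,\mdpLabel$ (the history extended by the new label) and the output $\mealyOutput(\mealyState,\mdpLabel) = \mdpRewardFunction(\mealyState\,\mdpLabel)$. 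The states of length exactly $eplength$ are terminal: no admissible trajectory reads more than $eplength$ labels, so the transitions and outputs out of these states are never exercised and may be fixed arbitrarily (say, a self-loop emitting $0$). Note that $\mealyOutput$ is only ever evaluated on histories $\mealyState\,\mdpLabel$ of length at least one, so we never need $\mdpRewardFunction(\varepsilon)$, which is consistent with $\mdpRewardFunction$ being defined on $(2^\rmLabels)^+$.

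It then remains to verify that $\machine$ encodes $\mdpRewardFunction$ in the sense of the encoding definition. I would argue by a short induction on $k$ that the run of $\machine$ on any admissible label sequence $\mdpLabel_1\ldots\mdpLabel_k$ (with $k \le eplength$) passes through the states $\varepsilon,\ \mdpLabel_1,\ \mdpLabel_1\mdpLabel_2,\ \ldots,\ \mdpLabel_1\ldots\mdpLabel_{k}$ in order: the base case is $\mealyState_0 = \varepsilon = \mealyInit$, and the inductive step is immediate from the definition of $\mealyTransition$. Since each state $\mealyState_{i-1} = \mdpLabel_1\ldots\mdpLabel_{i-1}$ reached after $i-1$ symbols has length $i-1 < eplength$, the ``normal'' branch of $\mealyOutput$ applies and yields $\mealyOutput(\mealyState_{i-1}, \mdpLabel_i) = \mdpRewardFunction(\mdpLabel_1\ldots\mdpLabel_i) = \mdpRewards_i$. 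Hence $\machine[\mdpLabel_1\ldots\mdpLabel_k] = \mdpRewards_1\ldots\mdpRewards_k$ as required, and finiteness of $\mealyStates$ together with $\mealyOutputAlphabet = \rmOutputAlphabet_f \subset \reals$ confirms that $\machine$ is a legitimate finite reward automaton.

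I expect the only real subtlety — rather than any hard calculation — to be justifying finiteness, and in particular making clear \emph{why the episodic hypothesis cannot be dropped}. Finiteness of the reward range alone is not sufficient: a reward function that, for instance, returns $1$ exactly when the history length is prime has a two-element range yet is realized by no finite-state machine. The bounded horizon is precisely what collapses the otherwise-infinite prefix tree to a finite set of states, so I would flag this as the conceptual crux of the argument even though the construction itself is entirely routine.
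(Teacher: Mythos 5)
Your proposal is correct and takes essentially the same route as the paper: the paper likewise unrolls histories into a finite tree-shaped automaton (its states $w_{i,t}$ are indexed by trajectory and time step rather than by the label word read so far), outputs the value of $\mdpRewardFunction$ on the accumulated history, and derives finiteness from the episodic bound $\mathit{eplength}$. If anything, your indexing of states by label sequences is the cleaner formulation, since it makes the transition function manifestly well-defined, whereas the paper's trajectory-indexed states must implicitly be identified along shared label prefixes to remain deterministic.
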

	\begin{proof} 
	We use $\mathcal{I}$ to denote the set of trajectories of length at most $\mathit{eplength}$ and $\mathcal{T}_i$ to denote the maximal time index for trajectory $i\in\mathcal{I}$. We construct a finite reward automaton $\machine = (\mealyStates, \mealyInit, \rmInputAlphabet, \rmOutputAlphabet, \mealyTransition, \mealyOutput)$, 
	\begin{itemize}
	\item $\mealyStates = \mealyInit\cup\{w_{i,t}\}_{i\in\mathcal{I}, 1<t\le \mathcal{T}_i}$, $w_{i,1}=\mealyInit$, $\forall i\in\mathcal{I}$;
	\item $\mealyOutputAlphabet=\mealyOutputAlphabet_f$;
	\item $\forall t\le\mathcal{T}_i, \mealyTransition\bigl( w_{i,t},\rmLabelingFunction( \mdpState_{i,t},\mdpAction_{i,t}, \mdpState_{i,t+1}) \bigr) = w_{i,t+1}$; and
	\item $\mealyOutput\bigl(w_{i,t}, \mdpLabel_i \bigr)=\mdpRewardFunction(\mdpLabel_{i,1},\dots,\mdpLabel_{i,t})$, where $\mdpLabel_{i,t}=\rmLabelingFunction( \mdpState_{i,t},\mdpAction_{i,t}, \mdpState_{i,t+1})$.
    \end{itemize}
    Then, it can be easily shown that for every trajectory $\trajectory{i,k}$ and corresponding label sequence $\mdpLabel_1\ldots \mdpLabel_k$ ($k\le\mathit{eplength}$), the reward sequence equals $\machine[\mdpLabel_1\ldots \mdpLabel_k]$, i.e., $\machine$ \implement s the reward function $\mdpRewardFunction$.
	\end{proof}
	\begin{remark}
	For a labeled MDP $\mdp = (\mdpStates, \mdpInit, \mdpActions, \mdpProb, \rmLabels, \mdpRewardFunction, \rmLabelingFunction)$ with a finite horizon, there can only be finitely many reward values, each corresponding to finitely many possible trajectories for the reward value. 
	\end{remark}

	\begin{theorem}
	For a labeled MDP $\mdp = (\mdpStates, \mdpInit, \mdpActions, \mdpProb, \rmLabels, \mdpRewardFunction, \rmLabelingFunction)$ with a non-Markovian bounded reward function $\mdpRewardFunction:(2^\rmLabels)^+\rightarrow\mathbb{R}$ in episodic RL, there exists at least one finite reward automaton $\machine = (\mealyStates, \mealyInit, \rmInputAlphabet, \rmOutputAlphabet, \mealyTransition, \mealyOutput)$ that can approximate the non-Markovian reward function $\mdpRewardFunction$ with arbitrary precision.
	\end{theorem}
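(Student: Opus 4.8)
The plan is to reduce this approximation statement to the exact-encoding result just established for finitely many reward values. The only obstruction to applying that theorem directly is that a bounded reward function may take infinitely many distinct values, whereas a finite reward automaton has a finite output alphabet $\mealyOutputAlphabet$. I would therefore first \emph{quantize} the reward range and then invoke the previous theorem on the resulting quantized function.

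Concretely, since $\mdpRewardFunction$ is bounded, I fix constants with $\mdpRewardFunction(\inputTrace) \in [c,d]$ for every label sequence $\inputTrace$. Given a target precision $\varepsilon > 0$, I would partition $[c,d]$ into $N = \lceil (d-c)/\varepsilon \rceil$ subintervals of length at most $\varepsilon$, pick a representative value $v_j$ (e.g., the midpoint) in the $j$-th subinterval, and define a quantized reward function $\hat{\mdpRewardFunction}$ by setting $\hat{\mdpRewardFunction}(\inputTrace) = v_j$ whenever $\mdpRewardFunction(\inputTrace)$ lies in the $j$-th subinterval. By construction $\hat{\mdpRewardFunction}$ takes values in the finite set $\{v_1,\dots,v_N\}\subset\mathbb{R}$ and satisfies $|\hat{\mdpRewardFunction}(\inputTrace) - \mdpRewardFunction(\inputTrace)| \le \varepsilon/2$ for every $\inputTrace$.

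Next, I would apply the preceding theorem to $\hat{\mdpRewardFunction}$, which has only finitely many reward values. This yields a finite reward automaton $\machine$ that exactly encodes $\hat{\mdpRewardFunction}$ on all trajectories of length at most $\mathit{eplength}$; here the bounded episode length is essential, since it guarantees that only finitely many label sequences must be accounted for, so the tree-shaped construction of the previous proof stays finite. Combining the encoding identity $\machine[\inputTrace]=\hat{\mdpRewardFunction}$ with the quantization bound, I would conclude that at every step of every attainable trajectory the reward produced by $\machine$ differs from the true reward by at most $\varepsilon/2$, and hence the cumulative discounted return differs by at most $\varepsilon/\bigl(2(1-\mdpDiscount)\bigr)$. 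Since $\varepsilon$ is arbitrary, $\machine$ approximates $\mdpRewardFunction$ to any desired precision.

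The main obstacle is conceptual rather than computational: one must pin down precisely what ``approximation with arbitrary precision'' means and verify that the notion is compatible with finiteness. The delicate point is that quantization alone does not make an arbitrary bounded function realizable, since a finite-state machine cannot encode a function with infinite Myhill--Nerode index. It is the episodic (bounded-horizon) restriction that rescues the argument: because only finitely many label sequences of length at most $\mathit{eplength}$ exist, the quantized function is trivially realizable by a finite automaton, and the approximation error is controlled uniformly over this finite set. I would be careful to state the error guarantee both per-step and in terms of the discounted return, and to note that the required number of states and output values grows with $1/\varepsilon$ and with $\mathit{eplength}$.
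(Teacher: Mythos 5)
Your proposal is correct and follows essentially the same approach as the paper: both quantize the bounded reward range into a finite $\epsilon$-grid and then reuse the tree-shaped construction of the finite-values theorem, the paper doing so inline by defining the output function as the nearest grid point via $\argmin$, while you package the same step as a black-box invocation of the preceding theorem on the quantized reward function. The only cosmetic difference is your midpoint quantization (error $\le \varepsilon/2$) versus the paper's grid-snapping (error $< \epsilon$), which changes nothing substantive.
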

	\begin{proof} 
	For a bounded reward function taking values in $[r_{\textrm{min}}, r_{\textrm{max}}]$, where $r_{\textrm{min}}, r_{\textrm{max}}\in\mathbb{R}$, $r_{\textrm{min}}\le r_{\textrm{max}}$, we construct a finite set $\mealyOutputAlphabet_{\epsilon}=\{r_{\textrm{min}}, r_{\textrm{min}}+\epsilon, r_{\textrm{min}}+2\epsilon, \dots, r_{\textrm{min}}+n_{\textrm{max}}\epsilon]$, where $\epsilon\in\mathbb{R}$, $\epsilon>0$, $n_{\textrm{max}}=\max\{n~\vert~r_{\textrm{min}}+n\epsilon\le r_{\textrm{max}}\}$. We use $\mathcal{I}$ to denote the set of trajectories of length at most $\mathit{eplength}$ generated from the labeled MDP $\mdp$ and $\mathcal{T}_i$ to denote the maximal time index for trajectory $i\in\mathcal{I}$. We construct a finite reward automaton $\machine = (\mealyStates, \mealyInit, \rmInputAlphabet, \rmOutputAlphabet, \mealyTransition, \mealyOutput)$, 
    where
	\begin{itemize}
	\item $\mealyStates = \mealyInit\cup\{w_{i,t}\}_{i\in\mathcal{I}, 1<t\le \mathcal{T}_i}$, $w_{i,1}=\mealyInit$, $\forall i\in\mathcal{I}$;
	\item $\mealyOutputAlphabet=\mealyOutputAlphabet_{\epsilon}$;
	\item $\forall t\le\mathcal{T}_i, \mealyTransition\bigl( w_{i,t}, \rmLabelingFunction( \mdpState_{i,t},\mdpAction_{i,t}, \mdpState_{i,t+1}) \bigr) = w_{i,t+1}$; and
	\item $\mealyOutput\bigl(w_{i,t}, \mdpLabel_{i,t} \bigr)=\argmin\limits_{r\in\mealyOutputAlphabet_{\epsilon}}\vert r-\mdpRewardFunction(\mdpLabel_{i,1},\dots,\mdpLabel_{i,t})\vert$, where $\mdpLabel_{i,t}=\rmLabelingFunction( \mdpState_{i,t},\mdpAction_{i,t}, \mdpState_{i,t+1})$.
    \end{itemize}
    Then, it can be easily shown that for any $\epsilon>0$ and every trajectory $\trajectory{i,k}$ ($k\le\mathit{eplength}$), we have $\vert\machine[\mdpLabel_{i,1},\dots,\mdpLabel_{i,t})]-\mdpRewardFunction[\mdpLabel_{i,1},\dots,\mdpLabel_{i,t}]\vert<\epsilon$, i.e., $\machine$ approximates the reward function $\mdpRewardFunction$ with arbitrary precision.
	\end{proof}

\section{Active Finite Reward Automaton Inference and Reinforcement Learning (\newAlgoName)}
In this section, we introduce the Active Finite Reward Automaton Inference and Reinforcement Learning (\newAlgoName) algorithm. Figure \ref{block} shows the block diagram of the \newAlgoName\ approach, and Algorithm \ref{whole_alg} shows the procedures of the \newAlgoName\ approach. The \newAlgoName\ approach consists of an active finite reward automaton inference engine and an RL engine. In the following two subsections, we will introduce the two engines and their interactions for obtaining the optimal RL policy for tasks with non-Markovian rewards.

	\begin{figure}[t]
		\centering
		\includegraphics[width=1\textwidth]{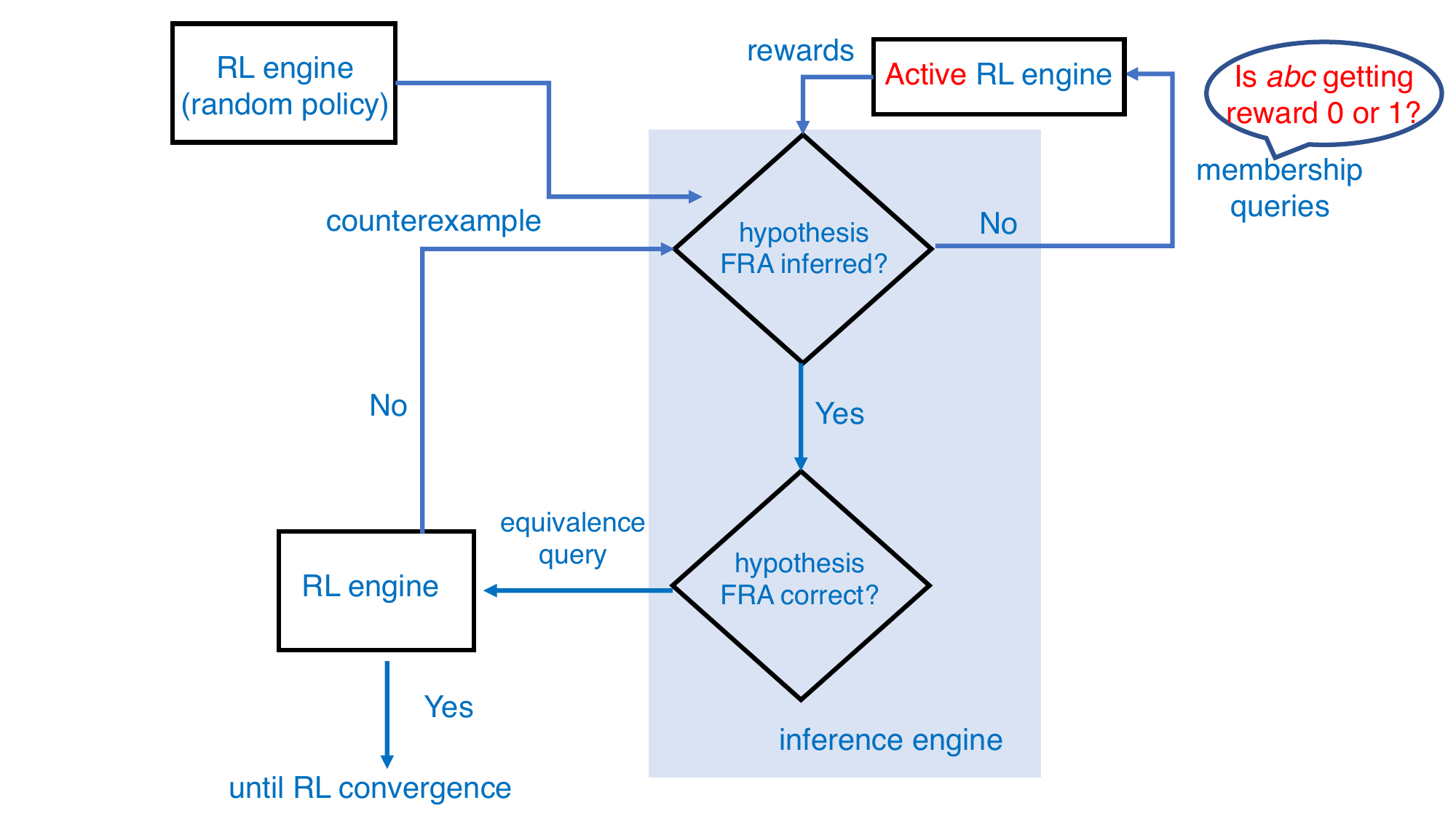}
		\caption{Block diagram of the \newAlgoName\ approach.
		Initially, a random RL engine is used generate counterexample traces for the inference engine. The inference engine alternately performs two tasks: (a) it creates membership queries for the RL engine if there are counterexamples that are inconsistent with the inferred Finite Reward Automaton (FRA); (b) it generates and sends equivalence queries to the RL-Engine. The active RL Agent will obtain rewards from the environment to answer the membership queries and the equivalence queries.
		}
% 		There are three RL Engines and an inference engine. The first RL Engine contains a random policy and is used as a placeholder. The Active RL Engine (Top Right) takes membership queries from the inference engine, and returns rewards from the environment back to the inference engine. The third RL Engine is used to answer equivalence queries.
% 		An RL Engine starts with a random policy (Top Left). If this agent doesn't have an inferred Finite Reward Automaton (FRA), then the Active RL Engine uses membership queries to create one (Top Right Corner). Once the hypothesis FRA is present, the inference engine will generate equivalence queries to check if the FRA correctly encodes the non-Markovian Reward (Bottom Center).If the RL engine determines that the FRA is correct, the agent will continue to train on the inferred FRA; otherwise the RL engine generates a counterexample that the inference engine uses to generate membership queries. These queries are used generate a new, updated inferred FRA. }
		\label{block}
	\end{figure}

\subsection{Active Finite Reward Automaton Inference Engine}
In this subsection, we introduce the active finite reward automaton inference engine which is based on L* learning \cite{angluin1987learning}, which is an algorithm that learns a \emph{minimal} deterministic finite automaton (DFA) that accepts an unknown regular language $\mathcal{L}$ by interacting with a \emph{teacher}. In the setting of this paper, the role of teacher is fulfilled by the RL engine. For simplicity, we only consider RL tasks with non-Markovian reward functions with finitely many reward values (hence there exists at least one finite reward automaton that can encode the non-Markovian reward function).

We first show that a finite reward automaton can be converted to a DFA.

\begin{definition} 
A \emph{deterministic finite automaton (DFA)} is a five-tuple $\dfa = (\dfaStates, \dfaState_I, \dfaInputAlphabet, \dfaTransition, \dfaFinalStates)$ consisting of a nonempty, finite set $\dfaState$ of states, an initial state $\dfaState_I \in \dfaStates$, an input alphabet $\dfaInputAlphabet$,a transition function $\dfaTransition \colon \dfaStates \times \dfaInputAlphabet \to \dfaStates$, and a set $\dfaFinalStates \subseteq \dfaStates$ of final states.The size of an DFA, denoted by $|\dfa|$, is the number $|\dfaStates|$ of its states.
\end{definition}

A \emph{run} of an DFA $\dfa = (\dfaStates, \dfaState_I, \dfaInputAlphabet, \dfaTransition, \dfaFinalStates)$ on an input word $\tau = \tau_1 \ldots \tau_n$ is a sequence $\dfaState_0 \ldots \dfaState_n$ of states such that $\dfaState_0 = \dfaState_I$ and $\dfaState_i = \dfaTransition(\dfaState_{ix}, a_i)$ for each $i \in \{ 1, \ldots, n \}$.
A run $\dfaState_0 \ldots \dfaState_n$ of $\dfa$ on a word $u$ is \emph{accepting} if $\dfaState_n \in \dfaFinalStates$, and a word $u$ is \emph{accepted} if there exists an accepting run.
The \emph{language} of an DFA $\dfa$ is the set $L(\dfa) = \{ u \in \dfaInputAlphabet^\ast \mid \text{$\dfa$ accepts $u$} \}$.
As usual, we call two DFAs $\dfa_1$ and $\dfa_2$ \emph{equivalent} if $L(\dfa_1) = L(\dfa_2)$.

We show that every finite reward automaton over the input alphabet $\mealyInputAlphabet$ and output alphabet $\mealyOutputAlphabet$ can be translated into an ``equivalent'' DFA as defined below. This DFA operates over the combined alphabet $\rmInputAlphabet \times \rmOutputAlphabet$ and accepts a word $(\mdpLabel_0, \mdpRewards_0) \ldots (\mdpLabel_k, \mdpRewards_k)$ if and only if $\machine$ outputs the reward sequence $\mdpRewards_0 \ldots \mdpRewards_k$ on reading the label sequence $\mdpLabel_0 \ldots \mdpLabel_k$.

\begin{lemma} \label{lem:Mealy-to-DFA}
Given a finite reward automaton $\machine = (\mealyStates, \mealyInit, \mealyInputAlphabet, \mealyOutputAlphabet, \mealyTransition, \mealyOutput)$, one can construct a DFA $\dfa_\machine$ with $|\machine| + 1$ states such that
\begin{multline*}
\mathcal{L}(\dfa_\machine) = \bigl \{ (\mdpLabel_0, \mdpRewards_0) \ldots (\mdpLabel_k, \mdpRewards_k) \in (\rmInputAlphabet \times \rmOutputAlphabet)^\ast \mid \machine[\mdpLabel_0 \ldots \mdpLabel_k] = \mdpRewards_0 \ldots \mdpRewards_k \bigr \}.
\end{multline*}
\end{lemma}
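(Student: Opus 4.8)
The plan is to simulate $\machine$ with the DFA while using the reward component of each input letter as a consistency check, and to add a single absorbing, rejecting sink state to absorb any inconsistency; this sink accounts for the ``$+1$'' in the state bound. Concretely, I would take $\dfaStates = \mealyStates \cup \{\bot\}$ with a fresh state $\bot \notin \mealyStates$, keep $\mealyInit$ as the initial state, use $\rmInputAlphabet \times \rmOutputAlphabet$ as the input alphabet, and make every original state accepting, i.e.\ $\dfaFinalStates = \mealyStates$. The transition function would read a letter $(\mdpLabel, \mdpRewards)$ by advancing along $\machine$ exactly when the reward matches the machine's output and diverting to $\bot$ otherwise:
\[
\dfaTransition(\mealyState, (\mdpLabel, \mdpRewards)) =
\begin{cases}
\mealyTransition(\mealyState, \mdpLabel) & \text{if } \mealyState \neq \bot \text{ and } \mealyOutput(\mealyState, \mdpLabel) = \mdpRewards,\\
\bot & \text{otherwise.}
\end{cases}
\]
By construction $\dfa_\machine$ has $|\mealyStates| + 1 = |\machine| + 1$ states, and $\bot$ is absorbing and non-accepting.

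The core of the argument is an invariant proved by induction on the length $j$ of the prefix read. I would show that the run of $\dfa_\machine$ on $(\mdpLabel_0, \mdpRewards_0)\ldots(\mdpLabel_{j-1}, \mdpRewards_{j-1})$ ends in $\mealyTransition^\ast(\mealyInit, \mdpLabel_0\ldots\mdpLabel_{j-1})$ whenever $\machine[\mdpLabel_0\ldots\mdpLabel_{j-1}] = \mdpRewards_0\ldots\mdpRewards_{j-1}$, and ends in $\bot$ otherwise, where $\mealyTransition^\ast$ denotes the usual extension of $\mealyTransition$ to words. The base case $j = 0$ holds because the empty run sits in $\mealyInit$ and $\machine[\varepsilon] = \varepsilon$. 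For the step, I would split on whether the prefix of length $j-1$ was already consistent: if not, the run is already in $\bot$, which is absorbing, so it stays in $\bot$ and the extended reward sequence is still a mismatch; if it was consistent, the run sits in the state $\machine$ reaches, and reading $(\mdpLabel_{j-1}, \mdpRewards_{j-1})$ either matches the machine's output (advancing to the next $\machine$-state and keeping the two reward sequences equal) or constitutes the first mismatch (sending the run to $\bot$).

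The language characterization then follows immediately: a word is accepted iff its run ends in a state of $\mealyStates = \dfaFinalStates$, which by the invariant happens exactly when no reward mismatch ever occurs, i.e.\ when $\machine[\mdpLabel_0 \ldots \mdpLabel_k] = \mdpRewards_0 \ldots \mdpRewards_k$. I expect the construction itself to be routine; the one place that needs care is phrasing the invariant so that it simultaneously encodes faithful simulation of $\machine$ on consistent prefixes \emph{and} the permanence of rejection once $\bot$ is entered, together with the bookkeeping of the empty-word edge case (which must be accepted, matching $\machine[\varepsilon] = \varepsilon$).
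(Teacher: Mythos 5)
Your proposal is correct and follows essentially the same construction as the paper's proof: the same sink state $\bot$, the same combined alphabet $\rmInputAlphabet \times \rmOutputAlphabet$, the same transition function (advance when the reward matches the output, divert to $\bot$ otherwise), and the same accepting set $\dfaFinalStates = \mealyStates$. The only difference is that you spell out the inductive invariant that the paper dismisses as ``a straightforward induction,'' which is a fine elaboration rather than a different approach.
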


\begin{proof}
Let $\machine = (\mealyStates_\machine, \mealyInit^\machine, \mealyInputAlphabet, \mealyOutputAlphabet, \mealyTransition_\machine, \mealyOutput_\machine)$ be a finite reward automaton.
Then, we define a DFA $\dfa_\machine = (\dfaStates, \dfaState_\init, \dfaInputAlphabet, \dfaTransition, \dfaFinalStates)$ over the combined alphabet $\rmInputAlphabet \times \rmOutputAlphabet$ by
\begin{itemize}
	\item $\dfaStates = \mealyStates_\machine \cup \{ \bot \}$ with $\bot \notin \mealyStates_\machine$;
	\item $\dfaState_\init = \mealyInit^\machine$;
	\item $\Sigma = \mealyInputAlphabet \times \mealyOutputAlphabet$;
	\item $\dfaTransition \bigl( \mealyState, (\mdpLabel, \mdpRewards) \bigr) = \begin{cases} \mealyState' & \text{if $\mealyTransition_\machine(\mealyState, \mdpLabel) = \mealyState'$ and $\mealyOutput_\machine(\mealyState, \mdpLabel) = \mdpRewards$;} \\ \bot & \text{otherwise;} \end{cases}$
	\item $\dfaFinalStates = \mealyStates_\machine$.
\end{itemize}

In this definition, $\bot$ is a new sink state to which $\dfa_\machine$ moves if its input does not correspond to a valid input-output pair produced by $\machine$.
A straightforward induction over the length of inputs to $\dfa_\machine$ shows that it indeed accepts the desired language.
In total, $\dfa_\machine$ has $|\machine| + 1$ states.
\end{proof}

During the learning process, the inference engine maintains an \emph{observation table} $\mathcal{O}=(S,E,T)$ where $S\subseteq\Sigma^*$ is a set of prefixes ($\Sigma=2^{\mathcal{P}}$), $E\subseteq\Sigma^*$ is a set of suffixes and $T:(S\cup S\cdot\Sigma)\times E\rightarrow\{0,1\} $. $\Sigma^*$ denotes finite traces from alphabet set $\Sigma$. For $s\in S\cup S\cdot\Sigma $, $e\in E$, if $s\cdot e\in\mathcal{L}$, then $T(s,e)=1$ and if $s\cdot e\notin \mathcal{L}$ then $T(s,e)=0$. Membership queries assign the correct value ($0$ or $1$) to $T(s,e)$. For simplicity, we denote $row(s)=(T(s,e_1),...,T(s,e_n))\in \{0,1\}^n,|E|=n$. The inference engine will always keep the observation table \emph{closed and consistent} as defined below.

	\begin{definition}
	An observation table $\mathcal{O}=(S,E,T)$ is closed if for each $t\in S\cdot\Sigma$, we can find some $s\in S$ such that $row(s)=row(t)$.
	\end{definition}
	\begin{definition}
		$\mathcal{O}$ is consistent if whenever for $s_1,s_2\in S$, $row(s_1)=row(s_2)$, then for any $\sigma\in\Sigma$, we have $row(s_1\sigma)=row(s_2\sigma)$.
	\end{definition}
	\begin{remark}
	\label{obsTableToDFA}
	If an observation table $\mathcal{O}=(S,E,T)$ is closed and consistent, it is possible to construct a DFA $\mathcal{M}(\mathcal{O})=(Q,\Sigma,\delta,Q_0,F)$ as the acceptor, where
	\begin{itemize}
		\item $Q=\{row(s)|s\in S\}$;
		\item $q_0=row(\epsilon)$;
		\item $\delta(row(s),\sigma)=row(s\sigma),\forall \sigma\in\Sigma$;
		\item $F=\{row(s)|s\in S, T(s)=1\}$.
	\end{itemize}
	\end{remark}

%How to construct automaton from closed,consistent O-table
\begin{definition}
For a closed and consistent observation table $\mathcal{O}=(S,E,T)$, we define a corresponding hypothesis finite reward automaton $\machine^{\textrm{h}}(\mathcal{O}) = (\mealyStates^{\textrm{h}}, \mealyState^{\textrm{h}}_{init}, \rmInputAlphabet, \rmOutputAlphabet, \mealyTransition^{\textrm{h}}, \mealyOutput^{\textrm{h}})$ as follows:
\begin{itemize}
    \item $\mealyStates^{\textrm{h}}=\{row(s): s\in S\}$,
    \item $\mealyState^{\textrm{h}}_{init}=row(\epsilon)$,
    \item $\delta^{\textrm{h}}(row(s),\sigma)=row(s\sigma)$,
    \item $\mealyOutput^{\textrm{h}}(\mealyState, \inputTrace)=1, \mbox{if} ~T(\inputTrace)=1; \mbox{and}~ \mealyOutput^{\textrm{h}}(\mealyState, \inputTrace)=0, ~\mbox{otherwise}$.
\end{itemize}
\end{definition}

%AFRAI-RL Algorithm:
\begin{algorithm}[H]
\caption{\newAlgoName}
\label{whole_alg}
\begin{algorithmic}[1]
    \State Initialize $\mathcal{O}=(S, E, T)$, $sample$, $Nsample$, $C$, $\qValue_{\textrm{m}}$, $\qValue_{\textrm{h}}$
	\While{there exists counterexample $(\lambda, \rho)$}
		\State Add $(\lambda, \rho)$ and its prefixes to $S$ \label{add_Obs}
	    \State $ChangeT \gets 0$	
    	\While{$\mathcal{O}$ is neither closed nor consistent $\wedge$ $(ChangeT=0)$} \label{table_cc} %cc = closed & consistent
		    \State$\chi\gets CheckObsTable(\mathcal{O})$\label{check_consist_close}
            \State $T, sample, Nsample, \qValue_{\textrm{m}}\gets  MQuery(T,\chi,sample,Nsample,C,\qValue_{\textrm{m}})$ \label{mem_query0}
        \EndWhile
    \State $(\lambda, \rho),\mathcal{O},sample,Nsample,\qValue_{\textrm{h}}\gets EQuery(\mathcal{O},sample,Nsample,\qValue_{\textrm{h}})$ \label{equ_query0}
    \EndWhile
\end{algorithmic}
\end{algorithm}

%explaining algorithm
Algorithm \ref{whole_alg} shows the AFRAI-RL algorithm. Algorithm \ref{whole_alg} starts by checking whether there are any counterexample traces that need to be added to the observation table. These traces and their suffixes are added to the observation table (Algorithm \ref{whole_alg}, Line \ref{add_Obs}), and the subroutine $CheckObsTable$ (see Algorithm \ref{checkobstable}) is used to find \textit{membership query traces} (noted as $\chi$ in all algorithms).
% Then we perform q-learning for $\machine^{\textrm{m}}(\zeta)$ until the membership query is answered through the reward of the trace $\zeta$. \\

Then, Algorithm \ref{whole_alg} proceeds to answer two types of queries, namely the \emph{membership} query (Algorithm \ref{whole_alg}, Line \ref{mem_query0}) and the \emph{equivalence} query (Algorithm \ref{whole_alg}, Line \ref{equ_query0}).

%Membership Query:
\begin{algorithm}[h]
	\caption{MQuery}
	\label{MembershipQuery}
	\begin{algorithmic}[1]
	\State \textbf{Input}: $T, \chi, sample, Nsample, C, \qValue_{\textrm{m}}$ 
    \State $Query\gets\textrm{membership}$\label{mem_query1}      
    \For{each $\zeta\in \chi$}
        \State Construct a query finite reward automaton $\machine^{\textrm{m}}(\zeta)$
        \State $counter\gets0$
        \State $PrevAnswer\gets check(\zeta,sample)$ \label{check}
        \If{$PrevAnswer\neq Null$} 
        \State $T(\zeta)=PrevAnswer$ \label{tableT}
        \Else
        \State $Inconsistent\gets 0$
            \While{$check(\zeta, sample)=Null\wedge (counter$ $<C)\wedge Inconsistent = 0$} \label{clock}
                \State $\inputTrace, \outputTrace, \qValue_{\textrm{m}}\gets$ RL-Engine($Query$, $\machine^{\textrm{m}}(\zeta)$, $\qValue_{\textrm{m}}$)
                \If{$\zeta$ is inconsistent with $(\inputTrace, \outputTrace)$} 
                    %\State $T(\zeta)=0$ 
                    \State $Inconsistent \gets 1$ \label{inconsistent_flag}
                \Else
                    \State Add $(\inputTrace, \outputTrace)$ to $sample$ \label{updateSample}
                    \State $ChangeT,T\gets$CheckNSample($(\inputTrace, \outputTrace)$,$Nsample$,$T$)\label{checkN1}
                    \If{$ChangeT=1$}
                        \State \textbf{return} $T,sample,Nsample,\qValue_{\textrm{m}}$ \label{early_exit}
                    \EndIf
                    \State $counter\gets counter+1$ \label{check2}
                \EndIf \label{inconsistent2}
            \EndWhile
        %\State $T(\zeta)=check(\zeta, sample)$ \label{check3}
        \If{$(counter > C)\vee (Inconsistent = 1)$} \label{took_too_long}
            \State $T(\zeta)\gets0$ \label{inconsistent1}
            \State Add $\zeta$ to $Nsample$ \label{Nsample1}
        \Else
            \State $T(\zeta) \gets 1$ 
        \EndIf 
        \label{Nsample2}
    \EndIf
    \EndFor 
    \label{endFor}
    \State \textbf{return} $T,sample,Nsample,\qValue_{\textrm{m}}$
\end{algorithmic} 
\end{algorithm}

\noindent\textbf{Answering Membership Queries}: The detailed procedures to answer membership queries are shown in Algorithm \ref{MembershipQuery}. The subroutine sets $T(\zeta)$ to 1 if the trace $\zeta$ can be accepted by the DFA converted from a finite reward automaton (that can encode an unknown non-Markovian reward function), and sets $T(\zeta)$ to 0 otherwise. We maintain a set $sample$ of accepted traces and use the subroutine $CheckSample$ (see Algorithm \ref{CheckSample}) to check whether a membership query can already be answered by the set $sample$ (Algorithm \ref{MembershipQuery}, Lines \ref{check} to \ref{tableT}). If it can, we provide the answer to $T(\zeta)$ in the observation table $\mathcal{O}$ (Algorithm \ref{MembershipQuery}, Line \ref{tableT}); otherwise, we perform RL to answer the membership query (see Algorithm \ref{RLepisode} for details). If the membership query trace $\zeta$ is \textit{inconsistent} with a trace in RL from the environment (e.g., the membership query trace $\zeta=(\mdpLabel_1,0),(\mdpLabel_2,1),(\mdpLabel_3,1)$, while a trace $(\mdpLabel_1,0),(\mdpLabel_2,0)$ is observed from the environment), then a flag is set (Algorithm \ref{MembershipQuery}, Line \ref{inconsistent_flag}) to stop the loop, $T(\zeta)$ is set to zero immediately (Algorithm \ref{MembershipQuery}, Line \ref{inconsistent1}), and move on to the next trace. Otherwise we add the trace to $sample$ (Algorithm \ref{MembershipQuery}, Line \ref{updateSample}).
To boost efficiency, we set a limit on how many episodes we perform to answer a membership query. This limit is $C\in\mathbb{Z}_{>0}$. We answer the membership query as 0 if after $C$ episodes the membership query still cannot be answered (Algorithm \ref{MembershipQuery}, Lines \ref{took_too_long}-\ref{inconsistent1}). Such traces are recorded in the set $Nsample$ (Algorithm \ref{MembershipQuery}, Line \ref{Nsample1}). Afterwards, if the trace for the membership query is encountered in the environment, we use the subroutine $CheckNSample$ (see Algorithm \ref{CheckNSample}) to change the original answers in the observation table $\mathcal{O}$ accordingly. This is performed during both membership (Algorithm \ref{MembershipQuery}, Line \ref{checkN1}) and equivalence queries (Algorithm \ref{Equivalence_Query}, Line \ref{checkN2}). If the answer was changed during a membership query(i.e., $ChangeT = 1$), we exit the Algorithm \ref{MembershipQuery} (Line \ref{early_exit}) to generate the additional membership query traces created from changing the table (Algorithm \ref{whole_alg}, Line \ref{table_cc}).

% We set a limit $C\in\mathbb{Z}_{>0}$ that dictates how long it takes to answer a membership query. maximal limit $C\in\mathbb{Z}_{>0}$ such that we answer the membership query as 0 if after $C$ episodes the membership query 
% still cannot be answered (Lines \ref{clock} and \ref{check2}). Such traces are recorded in the set $Nsample$ (Lines \ref{Nsample1} to \ref{Nsample2}). Afterwards, if the trace for the membership query is encountered in the environment, we use the subroutine $CheckNSample$ (see Algorithm \ref{CheckNSample}) to change the original answers in the observation table $\mathcal{O}$ accordingly. This is done during both membership (Line \ref{checkN1}) and equivalence queries (Line \ref{checkN2}).

%Equivalence Query
\begin{algorithm}[H]
\caption{EQuery}
\label{Equivalence_Query}
\begin{algorithmic}[1]
\State \textbf{Input} $\mathcal{O}, sample, Nsample, \qValue_{\textrm{h}}$
\State Construct a hypothesis finite reward automaton $\machine^{\textrm{h}}(\mathcal{O})$ %\label{equ_query0}
  	\State $Query\gets\textrm{equivalence}$
    \State \textbf{Do} 
        \State $\inputTrace, \outputTrace, \qValue_{\textrm{h}}\gets$ RL-Engine($Query$,$\machine^{\textrm{h}}$, $\qValue_{\textrm{h}}$)\label{rl_return} 
        \State Add $(\inputTrace, \outputTrace)$ to $sample$ \label{up_sam}
        \State $ChangeT,T\gets$CheckNSample($(\inputTrace,\outputTrace)$,$Nsample$,$T$)\label{checkN2}
    \State \textbf{Until} Find counterexample $(\lambda, \rho)$ \label{equ_query1}
    \State \textbf{Return} $(\lambda, \rho),\mathcal{O},sample,Nsample,\qValue_{\textrm{h}}$ \label{to_inference}
\end{algorithmic}
\end{algorithm}

\noindent\textbf{Answering Equivalence Queries}: The detailed procedures to answer equivalence queries are shown in Algorithm \ref{Equivalence_Query}. We perform RL with the hypothesis finite reward automaton, updating $sample$ along the way (Algorithm \ref{Equivalence_Query}, Line \ref{up_sam}), until a \textit{counterexample} is found (Algorithm \ref{Equivalence_Query}, Line \ref{equ_query1}). A counterexample is a trace where the rewards given by environment are different from the rewards given by hypothesis finite reward automaton.
Specifically, there are two types of counterexamples. A \emph{positive counterexample} is a trace that is accepted by the DFA converted from the current hypothesis finite reward automaton, but is not accepted by the DFA converted from any finite reward automaton that can encode the unknown non-Markovian reward function. A \emph{negative counterexample} is a trace that is not accepted by the DFA converted from the current hypothesis finite reward automaton, but is accepted by the DFA converted from any finite reward automaton that can encode the unknown non-Markovian reward function. The RL-engine returns counterexamples to the inference engine for another round of inference (Algorithm \ref{Equivalence_Query}, Line \ref{rl_return} and \ref{to_inference}). 

%Observation Table Check Algorithm
\begin{algorithm}[h]
    \caption{CheckObsTable}
    \label{checkobstable}
    \begin{algorithmic}[1]
        \State \textbf{Input}: $\mathcal{O}$ 
        \If{$\mathcal{O}$ is not consistent}
            \State Find $s_1$, $s_2\in S$, $\sigma\in \Sigma$ and $e\in E$ such that $row(s_1)=row(s_2)$ and
            $T(s_1\sigma e)\neq T(s_2\sigma e)$
            \State add $\sigma e$ to $E$
            \State $\chi\gets(S\cup S\Sigma)\sigma e$
    	\ElsIf{$\mathcal{O}$ is not closed}
            \State Find $s\in S$ and $\sigma\in\Sigma$ such that $\forall s\in S, row(s\sigma)\neq row(s)$
            \State add $s\sigma$ to $S$  
            \State $\chi\gets(s\sigma\cup s\sigma\Sigma)E$
        \EndIf
        \State Return $\chi$
    \end{algorithmic}
    
\end{algorithm}
%CheckSample and CheckNSample algorithms
\begin{algorithm}[h]
	\caption{CheckSample}                              
	\label{CheckSample}
	\begin{algorithmic}[1]
		\State \textbf{Input}: $\zeta$, $sample$ 
		 \For{each trace $(\lambda, \rho)$ in $sample$}
		 \If{$\zeta$ is prefix of $(\lambda, \rho)$}
            \State Return 1
        \EndIf
        \If{$\zeta$ is inconsistent with $(\lambda, \rho)$}
            \State Return 0
        \EndIf
        \EndFor
         \State Return $Null$
	\end{algorithmic} 
\end{algorithm}		
\begin{algorithm}[h]
	\caption{CheckNSample}                              
	\label{CheckNSample}
	\begin{algorithmic}[1]
		\State \textbf{Input}: $(\lambda, \rho)$, $Nsample$, $T$ 
		\For{each $\zeta'\in Nsample$} \label{checkNsample1}
            \If{$\zeta'$ is a prefix of ($\inputTrace, \outputTrace$)}
            \State $ChangeT=1$, $T(\zeta')=1$
            \EndIf
        \EndFor \label{checkNsample2}
        \State Return $ChangeT$,  $T$
	\end{algorithmic} 
\end{algorithm}	

%CheckObsTable Explanation
Algorithm \ref{checkobstable} generates \textit{membership query traces} based on whether or not the observation table $\mathcal{O}$ is closed or consistent. If the table is not consistent, then we add $\sigma e$ to $E$ and each $\zeta\in (S\cup S\Sigma)\sigma e$ forms a \textit{membership query trace}. If the table is not closed, then we add add $s\sigma$ to $S$ and each $\zeta\in (s\sigma\cup s\sigma\Sigma)E$ forms a \textit{membership query trace}.

%Check Sample Explanation
 Algorithm \ref{CheckSample} shows the subroutine $CheckSample$. It returns 1 or 0 if the membership query for $\zeta$ has already been answered, and $Null$ otherwise. For each trace $(\lambda, \rho)$ in $sample$, if a membership query trace $\zeta$ is prefix of $(\lambda, \rho)$, then $\zeta$ must be accepted by the DFA converted from the finite reward automaton that encodes the unknown non-Markovian reward function; hence $CheckSample$ returns 1. If $\zeta$ is \textit{inconsistent} with a trace in $sample$, then $\zeta$ cannot be accepted by the DFA mentioned above and $CheckSample$ returns 0.
 
 Algorithm \ref{CheckNSample} shows the subroutine $CheckNSample$. Each trace $\zeta'$ in $Nsample$ is checked to see if it is a prefix of $(\lambda, \rho)$ (the recent answer from the RL-Engine). If the trace is a prefix, then its answer in the observation table is changed (Line 4). A flag, $ChangeT$, is set so that the observation table is rechecked for being closed and consistent in Algorithm \ref{whole_alg}. 

By answering the membership and equivalence queries, $L^*$ algorithm is guaranteed to converge to the minimum DFA accepting the unknown regular language $\mathcal{L}$   using $O(|\Sigma|n^2+n\log c)$ membership queries and at most $n-1$ equivalence queries, where $n$ denotes the number of states in the final DFA and $c$ is the length of the longest counterexample from the RL engine when answering equivalence queries \cite{angluin1987learning}. 

\subsection{Active Reinforcement Learning Engine}
In this subsection, we introduce the active reinforcement learning engine.
We first define a \textit{query finite reward automaton} corresponding to a membership query trace $\zeta=(\mdpLabel_1,\mdpRewards_1),\ldots,(\mdpLabel_k,\mdpRewards_k)$ as follows.
\begin{definition}
	For a membership query trace $\zeta=(\mdpLabel_1,\mdpRewards_1),\ldots,(\mdpLabel_k,\mdpRewards_k)$, we define a corresponding query finite reward automaton $\machine^{\textrm{m}}(\zeta) = (\mealyStates^{\textrm{m}}, \mealyState^{\textrm{m}}_{init}, \rmInputAlphabet, \rmOutputAlphabet, \mealyTransition^{\textrm{m}}, \mealyOutput^{\textrm{m}})$ as follows:
	\begin{itemize}
		\item $\mealyStates^{\textrm{m}}=\{\mealyState^{\textrm{m}}_0, \mealyState^{\textrm{m}}_1, \dots, \mealyState^{\textrm{m}}_k\}$,
		\item $\mealyState^{\textrm{m}}_{init}=\mealyState^{\textrm{m}}_0$,
		\item for any $i\in[0, k-1], \delta^{\textrm{m}}(\mealyState^{\textrm{m}}_i,\mdpLabel_{i+1})=\mealyState^{\textrm{m}}_{i+1}$,  $\delta^{\textrm{m}}(\mealyState^{\textrm{m}}_i,\mdpLabel)=\mealyState^{\textrm{m}}_{i}$ for any $\mdpLabel\neq\mdpLabel_i$, and
		\item for any $\mealyState^{\textrm{m}}\in\mealyStates^{\textrm{m}}$ and any $\mdpLabel\in\mealyInputAlphabet$, $\mealyOutput^{\textrm{m}}(\mealyState^{\textrm{m}}, \mdpLabel)=1, \mbox{if} ~\delta^{\textrm{m}}(\mealyState^{\textrm{m}},\mdpLabel)\neq\mealyState^{\textrm{m}}; \mbox{and}~ \mealyOutput^{\textrm{m}}(\mealyState^{\textrm{m}}, \mdpLabel)=0, ~\mbox{otherwise}$.
	\end{itemize}
\end{definition}

Intuitively, the query finite reward automaton corresponding to a membership query trace $\zeta=(\mdpLabel_1,\mdpRewards_1),\ldots,(\mdpLabel_k,\mdpRewards_k)$ is a finite reward automaton that outputs a reward of one every time a new label $\mdpLabel_i$ ($i\in[1,k]$) is achieved (and the state of the finite reward automaton moves from $\mealyState^{\textrm{m}}_i$ to $\mealyState^{\textrm{m}}_{i+1}$). Therefore, in performing RL with the query finite reward automaton, the rewards obtained from the query finite reward automaton serve as incentives to encounter the label sequence in the membership query trace and hence to answer the membership query.

In the RL engine, we maintain two different types of q-functions: \textit{query q-functions} for answering membership queries, denoted as $q_{\textrm{m}}$; and \textit{hypothesis q-functions} for maximizing the cumulative rewards from the environment (also answering equivalence queries), denoted as $q_{\textrm{h}}$. 
 	
We update the query q-functions as follows:
	\begin{align}
	\begin{split}
	\qValue_{\textrm{m}}(\mdpState, \mealyState^{\textrm{m}}, \mdpAction)\gets&(1-\alpha)\qValue_{\textrm{m}}(\mdpState, \mealyState^{\textrm{m}}, \mdpAction)\\& +\alpha(\mdpRewards_{\textrm{m}}+\gamma\max\limits_{\mdpAction}\qValue_{\textrm{m}}(\mdpState', \mealyState'^{\textrm{m}}, \mdpAction)),
	\end{split}
	\label{query_q}
	\end{align}
	
Similarly, we update the hypothesis q-functions as follows:
		\begin{align}
	\begin{split}
	\qValue_{\textrm{h}}(\mdpState, \mealyState^{\textrm{h}}, \mdpAction)\gets&(1-\alpha)\qValue_{\textrm{h}}(\mdpState, \mealyState^{\textrm{h}}, \mdpAction)\\& +\alpha(\mdpRewards_{\textrm{h}}+\gamma\max\limits_{\mdpAction}\qValue_{\textrm{h}}(\mdpState', \mealyState'^{\textrm{h}}, \mdpAction)),
	\end{split}
	\label{hypothesis_q}
    \end{align}

% \noindent\textbf{Answering Membership Queries}: If the membership query trace is \textit{inconsistent} with a trace from the environment (e.g., the membership query trace $\zeta=(\mdpLabel_1,0),(\mdpLabel_2,1),(\mdpLabel_3,1)$, while a trace $(\mdpLabel_1,0),(\mdpLabel_2,0)$ is observed from the environment), then the membership query returns false (0 is returned). We assign 0 to query traces that are impossible for the underlying MDP. However, as it may take infinite time to find out that certain query traces are impossible for the underlying MDP, we assign $Null$ to query traces that are not found within $C$ episodes. This could lead to traces that are actually possible but assigned 0 anyway. However, we check if there are traces in later episodes that are assigned 0 before and correct its values according to the returned rewards.\\

% \noindent\textbf{Answering Equivalence Queries}: We perform RL with the hypothesis finite reward automaton and assign 0 to the equivalence query if a counterexample is found.

%RL Engine
\begin{algorithm}[ht]
	\caption{RL-Engine}
	\label{RLepisode}
	\begin{algorithmic}[1]
	    \State \textbf{Hyperparameters:} learning rate $\alpha$, discount factor $\gamma$, episode length $\mathit{eplength}$ 
		\State \textbf{Input:} Variable $Query$, a reward automaton $\machine$,
		 q-function $\qValue$
		\State $\mdpState \gets \mathit{InitialState()}; (\inputTrace, \outputTrace) \gets []$ 
		\State $\mealyState \gets \mealyState_{init}$
% 		\If{$Query=\textrm{membership}$} \label{rl_initStart}
% 			\State $\mealyState \gets \mealyState_{init}^{\textrm{m}}$;
% 			\Else
% 			\State $\mealyState \gets \mealyState_{init}^{\textrm{h}}$;
% 		\EndIf \label{rl_initEnd}
		\For {$0 \leq t < \mathit{eplength}$} \label{algLine:taskLoopStart}
		   %step through the environment
		   \State $\mdpState'$,$\mealyState'$,$\qValue\gets Step(Query,\machine,\qValue,\mdpState,\mealyState)$\label{w_init}
		  % \If{$Query=\textrm{membership}$}
		  %    \State $\mdpState'$,$\mealyState'$,$\qValue_{\textrm{m}}\gets Step(Query,\machine_{\textrm{m}},\qValue_{\textrm{m}},\mdpState,\mealyState)$
		  %\Else
		  %  \State $\mdpState'$,$\mealyState'$,$\qValue_{\textrm{h}}\gets Step(Query,\machine_{\textrm{h}},\qValue_{\textrm{h}},\mdpState,\mealyState)$
		  %\EndIf
		   %update trace, and state variables
            \State append $(\rmLabelingFunction(\mdpState, \mdpAction, \mdpState'),\mdpRewards)$ to $(\inputTrace,\outputTrace)$
	        \State $\mdpState\gets\mdpState'$, $\mealyState\gets\mealyState'$, $t\gets t+1$

		\EndFor \label{algLine:taskLoopEnd}
	
		\State \Return $(\inputTrace, \outputTrace, \qValue)$\label{return_RL}
	\end{algorithmic}
\end{algorithm}
%Step Subroutine
\begin{algorithm}[ht]
    \caption{Step}
    \label{step}
    \begin{algorithmic}[1]
        \State \textbf{Input:} Variable $Query$, a finite reward automaton $\machine$, a q-function $\qValue$, an MDP state $\mdpState$,and an FRA state $\mealyState$
        %Choose Action
        \State $\mdpAction = \text{GetEpsilonGreedyAction}(\qValue,\mealyState, \mdpState)$
		\label{algLine:choose_action}
		
		%Update MDP State
		\State $\mdpState' = \text{ExecuteAction}(\mdpState, \mdpAction)$
		
		%Get Reward
		\If{$Query=\textrm{membership}$}
			\State $\mdpRewards\gets\mealyOutput(\mealyState, \rmLabelingFunction(\mdpState, \mdpAction, \mdpState'))$;
			\Else 
			\State Observe $\mdpRewards$ from the environment;
		\EndIf \label{choose_reward}
		
		%Get Mealy State
		\State $\mealyState' =  \mealyTransition( \mealyState, \rmLabelingFunction(\mdpState, \mdpAction, \mdpState') )$ \label{get_mealy_state}
		
		%update q-function
		\State $\qValue(\mdpState,\mealyState, \mdpAction)\gets(1-\alpha)\qValue(\mdpState,\mealyState, \mdpAction)+\alpha(\mdpRewards+\gamma\max\limits_{\mdpAction}\qValue(\mdpState',\mealyState', \mdpAction))$
		\label{update_q}
		%update Mealy States: offline learning
		\For{$\hat{\mealyState} \in \mealyStates \setminus \set{w}$} \label{offlinelearningLoopStart}
			\State $\hat\mealyState' = \mealyTransition(\hat\mealyState, \rmLabelingFunction(\mdpState, \mdpAction, \mdpState'))$
            \State $\hat\mdpRewards = \mealyOutput(\hat\mealyState, \rmLabelingFunction(\mdpState, \mdpAction, \mdpState'))$ 
			\State $\qValue(\mdpState,\hat\mealyState, \mdpAction)\gets(1-\alpha)\qValue(\mdpState,\hat\mealyState, \mdpAction)+\alpha(\hat\mdpRewards+\gamma\max\limits_{\mdpAction}\qValue(\mdpState',\hat\mealyState', \mdpAction))$ 
		\EndFor \label{offlinelearningLoopEnd}
	    
	    %return the next mdp, mealy state and q-function
	    \State \Return $\mdpState'$, $\mealyState'$, $\qValue$
    \end{algorithmic}
\end{algorithm}
Algorithm \ref{RLepisode} shows the procedure to run each RL episode to answer an membership query or equivalence query. Algorithm \ref{RLepisode} first initializes the initial state $x$ of the MDP, the initial state of the (query or hypothesis) finite state automaton $w$ and the trace (as the empty trace). Specifically, $w$ is initialized as $\mealyState_{init}^{\textrm{m}}$ if it is in the membership query phase ($\machine = \machine_{\textrm{m}}$), and initialized as $\mealyState_{init}^{\textrm{h}}$ if it is in the equivalence query phase ($\machine = \machine_{\textrm{h}}$) (Line \ref{w_init}). Algorithm \ref{step} is used to run one step through the environment (see next paragraph for details). We feed the query q-function and finite reward automaton into algorithm \ref{step} if a membership query is being asked, and the hypothesis q-function and automaton otherwise. Algorithm \ref{RLepisode} returns the trace, the query q-function and the hypothesis q-function (Line \ref{return_RL}).

Algorithm \ref{step} runs one step through the environment, updating the q-function for either a membership or equivalence query. First, at state $x$ an action $a$ is selected according to the q-function using the epsilon-greedy approach and executed to reach a new state $x'$ (Line \ref{algLine:choose_action}). Then rewards are collected based on the type of query being asked. If a membership query is being asked, then the automaton's reward function is used; otherwise the reward is observed from the environment. The next mealy state for the automaton is calculated (Line \ref{get_mealy_state}) and the q-function supplied is updated (Line \ref{update_q}).

\begin{lemma}
\label{episodesToLearn}
	 Let $\mdp$ be a labeled MDP and $\machine$ the finite reward automaton encoding the reward function of $\mdp$. Then, AFRAI with $eplength \ge 2^{|\mdp|+1}(|\machine| + 1) - 1$ almost surely learns a finite reward automaton in the limit that is equivalent to $\machine$ on all attainable label sequences.
	\end{lemma}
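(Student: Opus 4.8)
The plan is to reduce the statement to the classical correctness-and-termination guarantee of the $L^*$ algorithm, and then to argue that, in the limit, the RL engine behaves as a \emph{correct} teacher on all attainable traces, with the episode-length bound being exactly what is needed to guarantee this. First I would invoke Lemma~\ref{lem:Mealy-to-DFA} to pass from the target finite reward automaton $\machine$ to the DFA $\dfa_\machine$ with $|\machine|+1$ states recognizing the language $\mathcal{L} \subseteq (\rmInputAlphabet \times \rmOutputAlphabet)^\ast$ of correct label-reward traces. Since every hypothesis maintained by the inference engine is the finite reward automaton $\machine^{\textrm{h}}(\mathcal{O})$ built from a closed and consistent observation table $\mathcal{O}$, it suffices to show that $\mathcal{O}$ converges to one describing $\dfa_\machine$ restricted to attainable traces.

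The heart of the argument is a reachability (diameter) bound explaining the quantity $2^{|\mdp|+1}(|\machine|+1)-1$. I would construct an NFA over the alphabet $2^{\rmLabels}$ that reads a label sequence and tracks the set of MDP states consistent with it; adding a fresh initial state gives $|\mdp|+1$ states, and the subset construction yields a deterministic automaton $\mathcal{D}$ with at most $2^{|\mdp|+1}$ states recognizing precisely the attainable label sequences (a sequence is attainable iff $\mathcal{D}$ avoids the empty-set dead state). Taking the product of $\mathcal{D}$ with $\dfa_\machine$ gives an automaton with at most $2^{|\mdp|+1}(|\machine|+1)$ states, so every reachable configuration of this product, in particular every configuration witnessing a distinguishing membership query trace or a counterexample, is reached by some attainable label sequence of length at most $2^{|\mdp|+1}(|\machine|+1)-1$. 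Hence, with $eplength$ at least this value, every label sequence the inference engine ever needs to resolve on attainable traces can be produced within a single episode.

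Next I would handle the ``almost surely'' component. Because actions are chosen $\epsilon$-greedily, every positive-probability trajectory of length at most $eplength$ is realized with positive probability in each episode, so over infinitely many episodes every attainable label sequence is encountered infinitely often almost surely. This lets me show the RL engine answers all relevant queries correctly in the limit. For a membership query on a trace $\zeta$ whose label sequence is attainable, the RL-Engine eventually produces a consistent trace, so $CheckSample$ returns the correct value; the only possible errors are ``$0$'' answers returned after the cutoff $C$ for traces that are in fact attainable, and these are recorded in $Nsample$ and later overwritten by $CheckNSample$ once the trace is observed, which happens almost surely by the preceding sentence. For an equivalence query, if $\machine^{\textrm{h}}(\mathcal{O})$ disagrees with $\machine$ on some attainable label sequence, then a distinguishing trace of length at most $eplength$ exists and is produced almost surely, yielding a counterexample that forces another round of inference; conversely, if no attainable distinguishing trace exists, the hypothesis is already equivalent to $\machine$ on all attainable label sequences and we are done.

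Finally, I would combine these facts with the $L^*$ termination-and-correctness guarantee recalled earlier (finitely many membership and at most $n-1$ equivalence queries before $L^*$ stabilizes to the minimal DFA consistent with the teacher). Since each counterexample strictly refines the observation table, the inference engine almost surely stops producing counterexamples on attainable traces and converges to a finite reward automaton equivalent to $\machine$ on all attainable label sequences. The main obstacle I anticipate is the self-correction bookkeeping: one must argue carefully that the finite cutoff $C$ can only cause \emph{temporary} errors and never permanently corrupts $\mathcal{O}$, i.e., that every wrongly assigned ``$0$'' is eventually detected and corrected, and that the intermittent table corrections which reset the closed/consistent loop do not prevent eventual stabilization. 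Making this ``errors are transient, progress is eventually monotone'' argument rigorous, together with confirming that the subset-construction and product diameter are indeed what the episode-length bound encodes, is where the real work lies.
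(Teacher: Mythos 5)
Your proposal follows essentially the same route as the paper's own proof: the paper likewise argues that with this episode length every attainable trace is almost surely experienced (delegating the combinatorial justification of the bound to the proof of Lemma~2 in \cite{Xu2019jirp}, which is precisely the subset-construction/product argument you reconstruct), that membership and equivalence queries are consequently answered correctly with probability one in the limit thanks to the $Nsample$/$CheckNSample$ self-correction of the observation table, and that Angluin's $L^*$ guarantee then yields the minimal DFA of Lemma~\ref{lem:Mealy-to-DFA}, hence a finite reward automaton equivalent to $\machine$ on all attainable label sequences. The only substantive difference is that you inline the origin of the bound $2^{|\mdp|+1}(|\machine|+1)-1$ rather than citing it, and there you correctly flag the one point needing extra care (a shortest-counterexample bound strictly requires the hypothesis automaton in the product as well, not just $\dfa_\machine$ and the attainability automaton), a detail the paper's proof never confronts because it defers this step entirely to \cite{Xu2019jirp}.
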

\begin{proof} %combine first two questions
	Given a $eplength = 2^{|\mdp|+1}(|\machine| + 1) - 1$, we are almost sure to experience every possible attainable trace from the environment. The proof of this is similar to the proof provided for Lemma 2 in \cite{Xu2019jirp} (Proof is located in Appendix C of \cite{Xu2019jirp}). Given every attainable trace, AFRAI-RL will answer all membership and equivalence queries correctly with probability 1 in the limit, because the observation table $\mathcal{O}$ can be changed as the algorithm \ref{whole_alg} runs (Algorithm \ref{whole_alg}, Lines \ref{mem_query0} and \ref{equ_query0}).  %\zhe{Point to which line number of which algorithm where the observation table $\mathcal{O}$ can be changed}.
	Furthermore, the author in \cite{angluin1987learning} shows that if all membership and equivalence queries can be answered, a DFA can be formed from the observation table (as in Remark \ref{obsTableToDFA}). This DFA will be the smallest DFA that can accept the language defined in Lemma \ref{lem:Mealy-to-DFA}. In this RL context, this language will match the language of the DFA converted from the finite reward automaton; i.e., the language will encode the finite reward automaton that is equivalent to $\machine$.
	Therefore, the observation table encodes the finite reward automaton that is equivalent to $\machine$.
% 	then the observation table can be converted into the smallest DFA consistent with the unknown regular set that one wishes to learn. In the RL-scenario, this unknown set is the reward machine $\machine$.
	
% 	Because we can change the table in one direction (from 0 to 1 to the membership query), we will almost surely answer every equivalence and membership query correctly. by auglin the L* algo we infer a deterministic finite automaton(corresponding to a finite reward automaton by lemma 1).This lemma can be proved similarly to Lemma 2 in the JIRP Paper (arix version) in Appendix C.
\end{proof}
With Lemma \ref{episodesToLearn}, we can proceed to prove Theorem 3.
\begin{theorem}
\label{thm:convergenceInTheLimit}
Let $\mdp$ be a labeled MDP and $\machine$ the reward machine encoding of the reward function of $\mdp$. Then, \newAlgoName\ with $\mathit{eplength} \geq 2^{|\mdp| + 1} \cdot (|\machine| + 1) - 1$ almost surely converges to an optimal policy in the limit.
% B/c we can change the table, and almost surely encounter every trace, we will answer all membership queries with probability 1. Therefore, you will answer all membership query questions correctly. By algulin's paper, the algorithm 
% Let $\mdp$ be a labeled MDP and $\machine$ the finite reward automaton encoding the non-Markovian reward function of $\mdp$.
\end{theorem}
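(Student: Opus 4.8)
The plan is to combine Lemma~\ref{episodesToLearn} with the classical convergence guarantee for tabular Q-learning. First I would invoke Lemma~\ref{episodesToLearn}: with $\mathit{eplength} \ge 2^{|\mdp|+1}(|\machine|+1)-1$, almost surely \newAlgoName\ learns in the limit a hypothesis finite reward automaton $\machine^{\textrm{h}}$ that is equivalent to $\machine$ on all attainable label sequences. Since the learner only refines its hypothesis upon receiving a counterexample, I would argue that almost surely there is a (random but finite) round after which the hypothesis stops changing and equals $\machine$ on all attainable sequences. From that point on, the equivalence-query loop of Algorithm~\ref{Equivalence_Query} never again finds a counterexample on an attainable trace, so the algorithm remains in the equivalence-query phase and keeps executing RL episodes indefinitely on the fixed product MDP $\mdp_{\machine^{\textrm{h}}}$.

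Second, I would observe that on this fixed product MDP the reward is Markovian by construction of the Reward Product MDP, and that the hypothesis-$q$ update~\eqref{hypothesis_q} is exactly tabular Q-learning on $\mdp_{\machine^{\textrm{h}}}$. Hence it suffices to verify the standard hypotheses of the Watkins--Dayan convergence theorem \cite{Watkins1992}: (i) the rewards are bounded, which is immediate because $\mealyOutputAlphabet$ is a finite subset of $\mathbb{R}$; (ii) the learning rate obeys the Robbins--Monro conditions $\sum_t \alpha_t = \infty$ and $\sum_t \alpha_t^2 < \infty$, which I take as a standing assumption on the hyperparameters; and (iii) every reachable state--action pair of the product MDP is visited infinitely often. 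For (iii) I would use the fact that after stabilization infinitely many episodes are executed, each of length at least the stated bound and under the $\varepsilon$-greedy rule of Algorithm~\ref{step}, so every state--action pair attainable within $\mathit{eplength}$ steps is selected infinitely often with probability one.

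Third, given (i)--(iii), the Watkins--Dayan theorem yields that $q_{\textrm{h}}$ converges almost surely to the optimal action-value function $q^\ast$ of $\mdp_{\machine^{\textrm{h}}}$, and the induced greedy policy is optimal for $\mdp_{\machine^{\textrm{h}}}$. Because $\machine^{\textrm{h}}$ agrees with $\machine$ on all attainable label sequences, the reward sequence, and therefore the discounted return, of every attainable trajectory coincides under $\mdp_{\machine^{\textrm{h}}}$ and $\mdp_{\machine}$; consequently an optimal policy for the learned product MDP is an optimal policy for the original non-Markovian reward problem, which closes the argument.

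Finally, I expect the main obstacle to be the non-stationarity introduced by the evolving hypothesis: until the automaton stabilizes, $q_{\textrm{h}}$ is updated against a sequence of possibly incorrect product MDPs, and one must ensure these finitely many ``wrong'' phases do not spoil the asymptotic analysis. The clean way to handle this is to note that convergence of Q-learning is a tail property, insensitive to any finite prefix of updates, so only the post-stabilization episodes matter. The delicate point worth spelling out is the joint almost-sure event---simultaneously that the hypothesis stabilizes to a correct automaton (Lemma~\ref{episodesToLearn}) and that, conditioned on this, infinite exploration of the correct product MDP occurs---which requires that stabilization happens at an almost-surely finite time and that $\varepsilon$-greedy exploration continues unabated thereafter.
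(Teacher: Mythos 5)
Your proposal is correct and takes essentially the same route as the paper's own proof: invoke Lemma~\ref{episodesToLearn} to conclude that the learned automaton eventually agrees with $\machine$ on all attainable label sequences, then apply the classical Q-learning convergence guarantee of \cite{Watkins1992} to the product MDP under $\varepsilon$-greedy exploration, and transfer optimality from the product MDP back to $\mdp$. If anything, your write-up is more careful than the paper's, which does not explicitly address the finitely many pre-stabilization episodes (the tail-property argument) or the Robbins--Monro learning-rate conditions that you state as a standing assumption.
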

\begin{proof}
Lemma 2 shows that, eventually, the reward machine learned by \newAlgoName, will be equivalent to $\machine$ on all attainable label sequences. Let $\mathcal{H}$ be the reward machine learned by \newAlgoName~ and $\mdp_H$ be the product MDP.

Thus an optimal policy for $\mdp_H$ will also be optimal for $\mdp$. When running episodes of QRM (Algorithm \ref{RLepisode}) under the reward machine $\mathcal{H}$, an update of a q-function connected to a state of $\mathcal{H}$ corresponds to updating the q function for $\mdp_H$. Since $eplength
\ge |M|$, the fact that
QRM uses the epsilon-greedy strategy and that updates are done in parallel for all states of $\mathcal{H}$ implies that every state-action pair of the $\mdp_H$ will be seen infinitely often. Hence, the convergence of q-learning for $\mdp_H$ to an optimal policy is guaranteed by \cite{Watkins1992}. Therefore, as the number of episodes goes to infinity, with  $eplength \ge 2^{|\mdp| + 1} \cdot (|\machine| + 1) - 1$, AFRAI-RL converges towards an optimal policy.
\end{proof}

\section{Case Studies}
\label{sec_case}

In this section, we apply the proposed approach to the office world scenario adapted from~\cite{DBLP:conf/icml/IcarteKVM18} and the craft world scenario from~\cite{andreas2017modular}. We perform the following four different methods:
\begin{itemize}
	
	\item \textcolor{black}{\newAlgoName: We use the libalf~\cite{libalfTool} implementation of active automata learning as the algorithm to infer finite reward automata. }\\
	
	\item \textcolor{black}{\algoName{}-SAT: We use the libalf~\cite{libalfTool} implementation of SAT-solving (see Section 3.2 of \cite{Xu2019jirp}) as the algorithm to infer finite reward automata. }\\
	
	\item \textcolor{black}{LRM-QRM: We use the QRM implementation from ~\cite{DBLP:conf/icml/IcarteKVM18}, adapted to test the agent at the end of each episode.}\\
	
	\item \textcolor{black}{PPO2: We use the Stable Baselines implementation of PPO2 \cite{schulman2017proximal}. The state space is a history of the past states the agent has been in. This was added because PPO2 doesn't have any way of remembering it's previous states.}

\end{itemize}

In \cite{Xu2019jirp}, the authors have shown that JIRP-SAT and JIRP-RPNI outperform q-learning in augmented
state space (QAS), hierarchical reinforcement learning (HRL), and deep reinforcement learning with double
q-learning (DDQN) in three case studies. Therefore, if we can show that \newAlgoName~outperforms JIRP-SAT and JIRP-RPNI, then we can deduce that \newAlgoName~outperforms QAS, HRL and DDQN as well.

%Officeworld introduction, explanation and discussion
\subsection{\Office}
%OfficeWorld Map
\begin{figure}[t]
	\centering
	\includegraphics[width=0.35\textwidth]{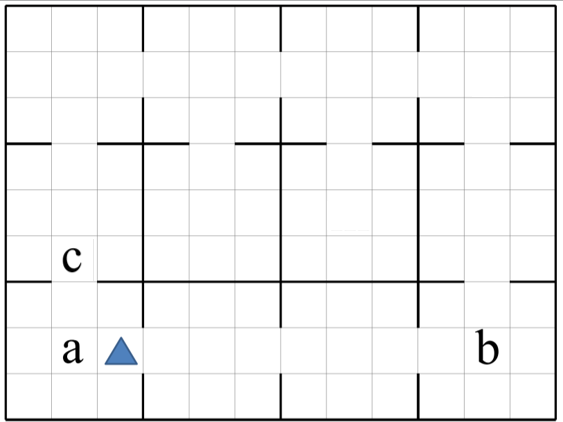}
	\caption{\textcolor{black}{The map in the \office.}}
	\label{office_map_m}
\end{figure}
We consider the \office\ in the 9$\times$12 grid-world. \textcolor{black} {Figure \ref{office_map_m} shows the map in the \office. We use the triangle to denote the initial position of the agent.} The agent has four possible actions at each time step: move north, move south, move east and move west. After each action, the robot may slip to each of the two adjacent cells with probability of 0.05. In Algorithm \ref{whole_alg}, we set $C=500$.

\textcolor{black} {We consider the following three tasks:\\
\textbf{\OfficeA}: first go to $\textrm{a}$, then go to $\textrm{b}$ and $\textrm{a}$ in this order, and finally return to $\textrm{c}$. Episode Length was set to 200, and total training time was set to 1,000,000.\\
\textbf{\OfficeB}: first go to $\textrm{b}$, then go to $\textrm{c}$ and $\textrm{a}$ in this order, then repeat the sequence. Episode length was set to 800, and total training time was set to 2,000,000.\\
\textbf{\OfficeC}: first go to $\textrm{c}$, then go to $\textrm{b}$ and $\textrm{a}$ in this order, then go to $\textrm{b}$ and $\textrm{c}$ in this order and return to $\textrm{a}$. Episode length was set to 800, and total training time was set to 6,000,000.}

%Explanation of experiment results:

Figure \ref{office_tasks} shows the attained rewards of 10 independent simulation runs for each task, averaged every 10 training steps. For task 1, it can be seen that, on average,
the proposed \newAlgoName\ approach converges to an optimal policy in about 0.2 million training steps, while \methodA-SAT, LRM-QRM, and PPO2 do not converge to an optimal policy. For task 2, on average the proposed \newAlgoName\ approach converges to an optimal policy in about 1.8 million training steps, while \methodA-SAT, LRM-QRM, and PPO2 do not converge to an optimal policy. For task 3, on average the proposed \newAlgoName\ approach converges to an optimal policy in about 4.0 million training steps, while \methodA-SAT converges to an optimal policy in about 4.5 million training steps and LRM-QRM, and PPO2 do not converge to an optimal policy.

% \textbf{\OfficeD}: get coffee at $\textrm{c}$ and deliver the coffee to the office $\textrm{o}$, then come to get coffee at $\textrm{c}$ and deliver the coffee to the frontdesk $\textrm{d}$.}

\newcommand{\figurespacing}{0.32}
\begin{figure*}
	\centering
\begin{subfigure}[b]{\figurespacing\textwidth}  
	\centering
	\includegraphics[width=1.03\textwidth]{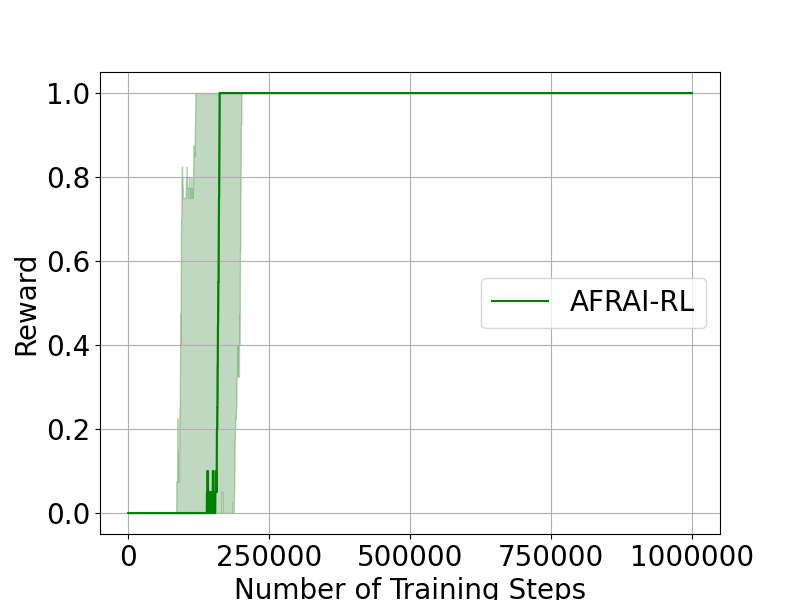}
	%\caption{\newAlgoName}
\end{subfigure}
\begin{subfigure}[b]{\figurespacing\textwidth}  
	\centering
	\includegraphics[width=1.03\textwidth]{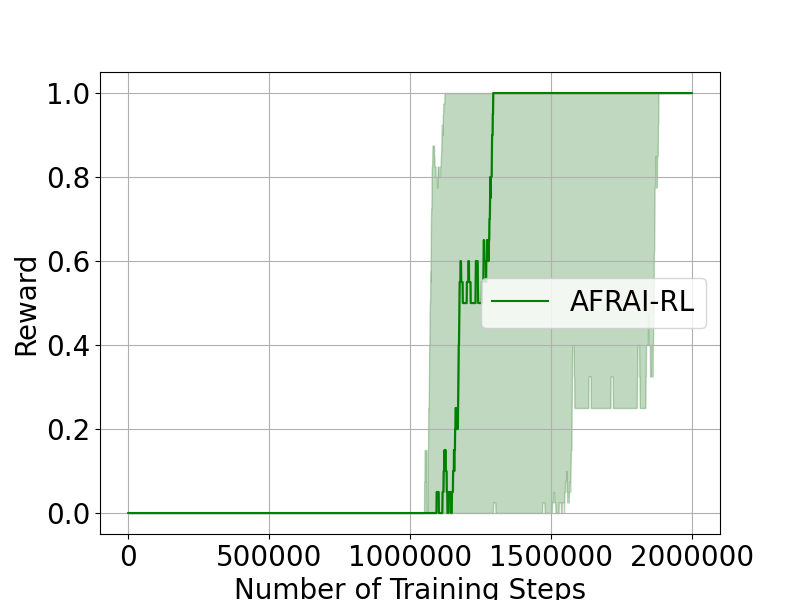}
	%\caption{\methodA-SAT}
	\end{subfigure}
\begin{subfigure}[b]{\figurespacing\textwidth}  
	\centering
	\includegraphics[width=1.03\textwidth]{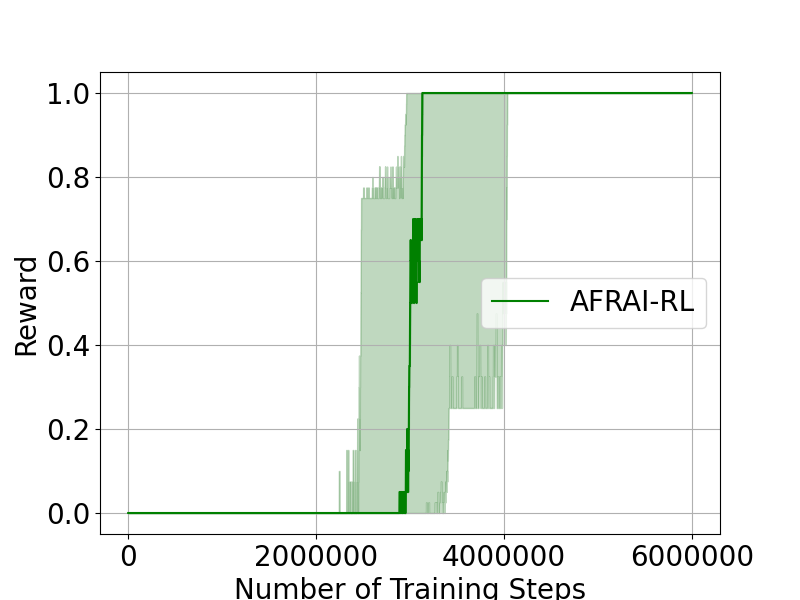}
	%\caption{LRM-QRM}
	\end{subfigure}
\begin{subfigure}[b]{\figurespacing\textwidth}  
	\centering
	\includegraphics[width=\textwidth]{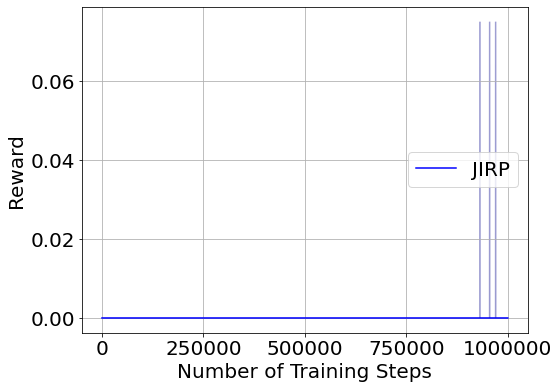}
	%\caption{PPO2}
	\end{subfigure}
%new row bcabca
\begin{subfigure}[b]{\figurespacing\textwidth}  
	\centering
	\includegraphics[width=\textwidth]{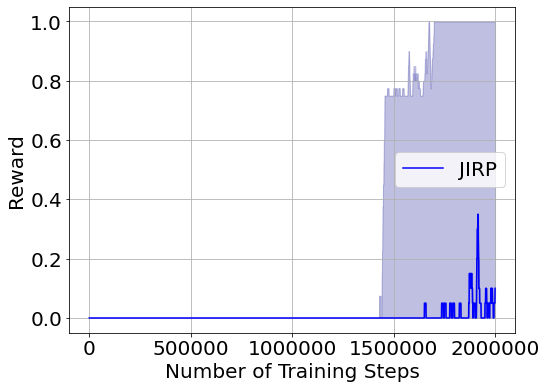}
	%\caption{\newAlgoName}
\end{subfigure}
\begin{subfigure}[b]{\figurespacing\textwidth}  
	\centering
	\includegraphics[width=\textwidth]{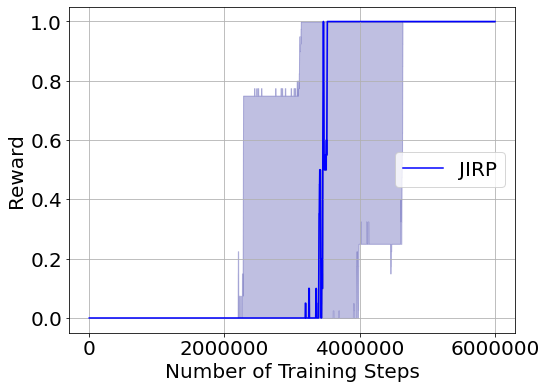}
	%\caption{\methodA-SAT}
	\end{subfigure}
\begin{subfigure}[b]{\figurespacing\textwidth}  
	\centering
	\includegraphics[width=0.96\textwidth]{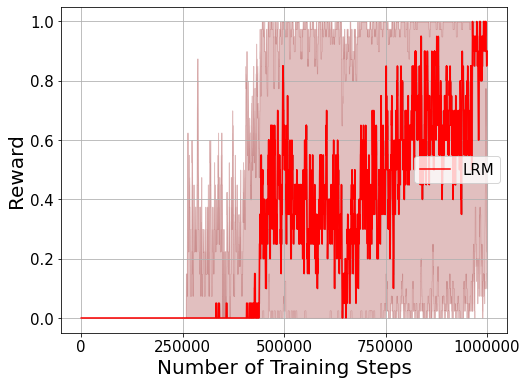}
	%\caption{LRM-QRM}
	\end{subfigure}
\begin{subfigure}[b]{\figurespacing\textwidth}  
	\centering
	\includegraphics[width=0.96\textwidth]{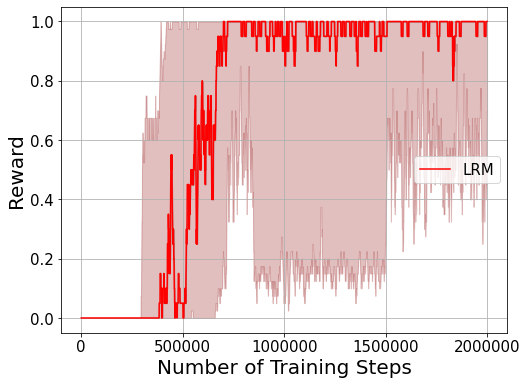}
	%\caption{PPO2}
	\end{subfigure}
%new row cbabca
\begin{subfigure}[b]{\figurespacing\textwidth}  
	\centering
	\includegraphics[width=0.96\textwidth]{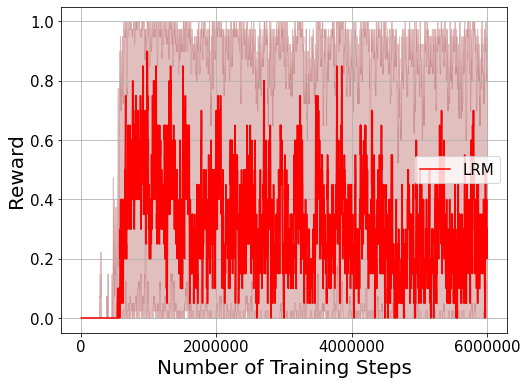}
\end{subfigure}
\begin{subfigure}[b]{\figurespacing\textwidth}  
	\centering
	\includegraphics[width=\textwidth]{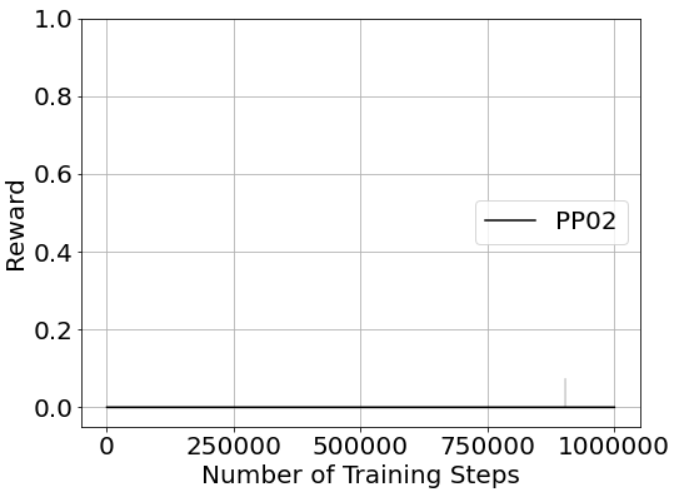}
	\caption{Task 1}
	\end{subfigure}
\begin{subfigure}[b]{\figurespacing\textwidth}  
	\centering
	\includegraphics[width=\textwidth]{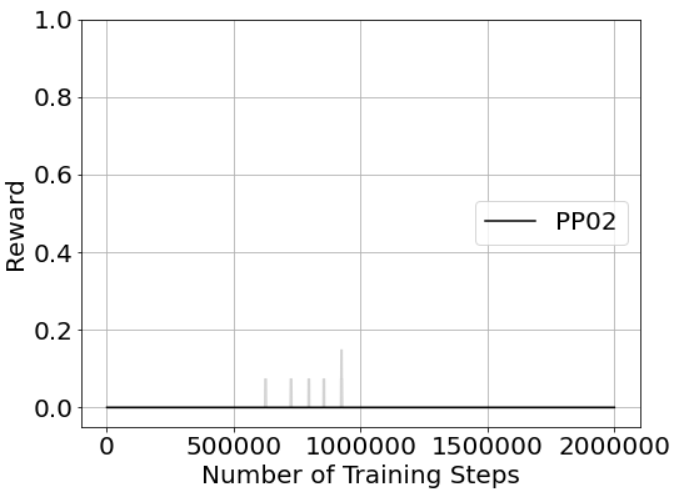}
	\caption{Task 2}
	\end{subfigure}
\begin{subfigure}[b]{\figurespacing\textwidth}  
	\centering
	\includegraphics[width=\textwidth]{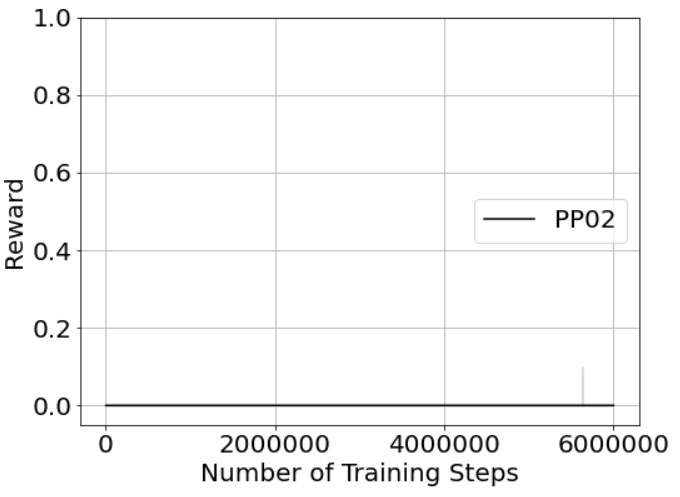}
	\caption{Task 3}
	\end{subfigure}
\caption{Attained rewards of 10 independent simulation runs of the \office, averaged for every 10 training steps in \newAlgoName~(First row), \methodA-SAT (Second Row), LRM-QRM (Third Row), and PPO2 (Fourth Row): (a) Task 1; (b) Task 2; (c) Task 3.}  
\label{office_tasks}
\end{figure*}
\newcommand{\figurespacingCraft}{0.35}

\subsection{\craft}
We consider the \craft~in the 21$\times$21 grid-world \cite{andreas2017modular}. The four actions and the slip rates are the same as in the \office. We train on two tasks: making a hammer and spear. In Algorithm \ref{whole_alg}, we set $C=500$.

We consider the following two tasks, noting the symbol used for each object in parenthesis:\\
\textbf{\OfficeA}: Build a Hammer. Agent must collect string ($\textrm{b}$), stone ($\textrm{e}$), iron ($\textrm{f}$), stone ($\textrm{e}$) (in this order) and travel to the workbench ($\textrm{c}$) to make the hammer. Episode length was set to 400, and total training time was set to 400,000.\\
\textbf{\OfficeB}: Build a Spear. The agent must collect string, stone, wood ($\textrm{a}$), string and travel to the workbench to make the spear. Episode length was set to 400, and total training time was set to 250,000.
%Craftworld Graphs
\begin{figure*}
	\centering
\begin{subfigure}[b]{\figurespacingCraft\textwidth}  
	\centering
	\includegraphics[width=\textwidth]{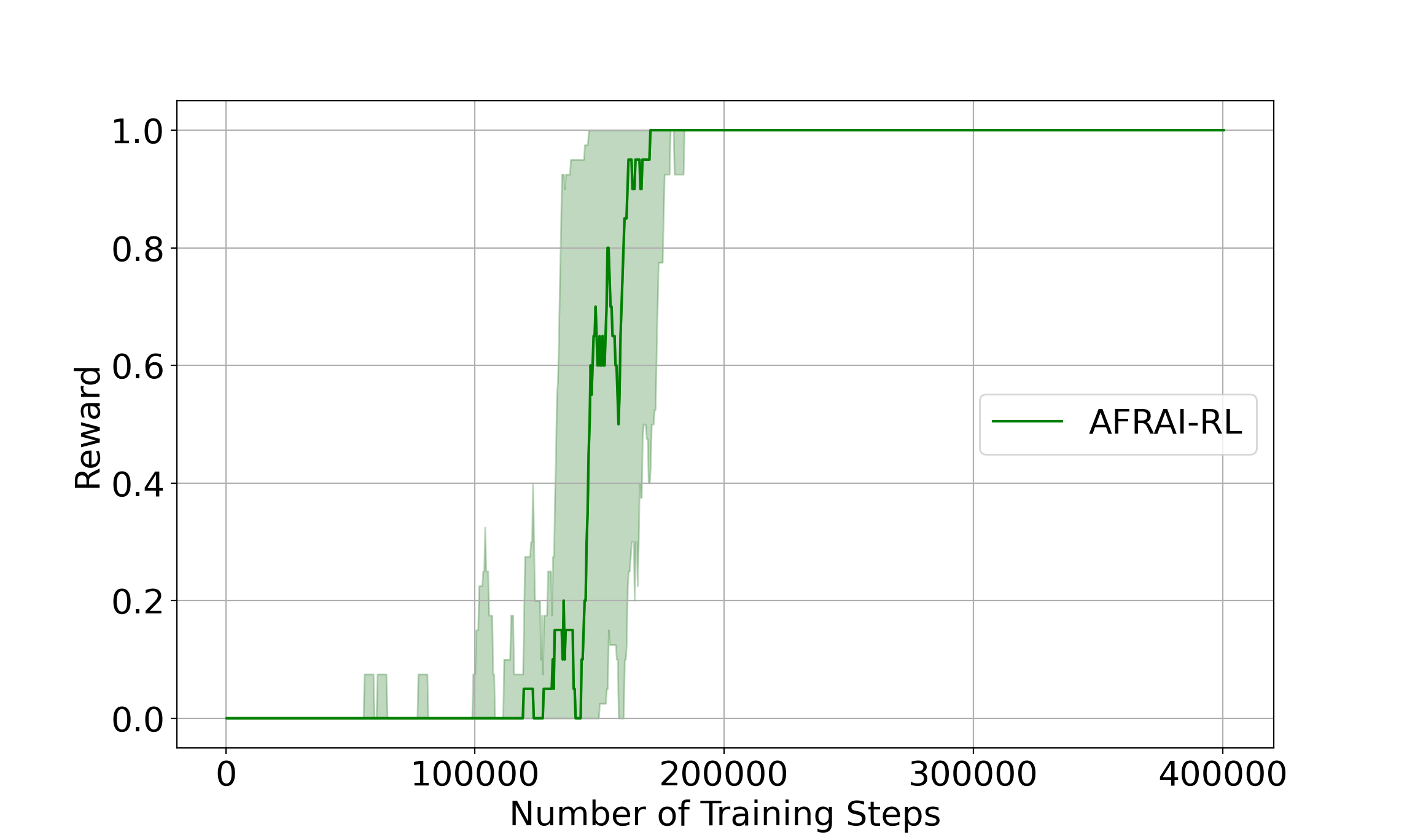}
	%\caption{\newAlgoName}
\end{subfigure}
\begin{subfigure}[b]{\figurespacingCraft\textwidth}  
	\centering
	\includegraphics[width=\textwidth]{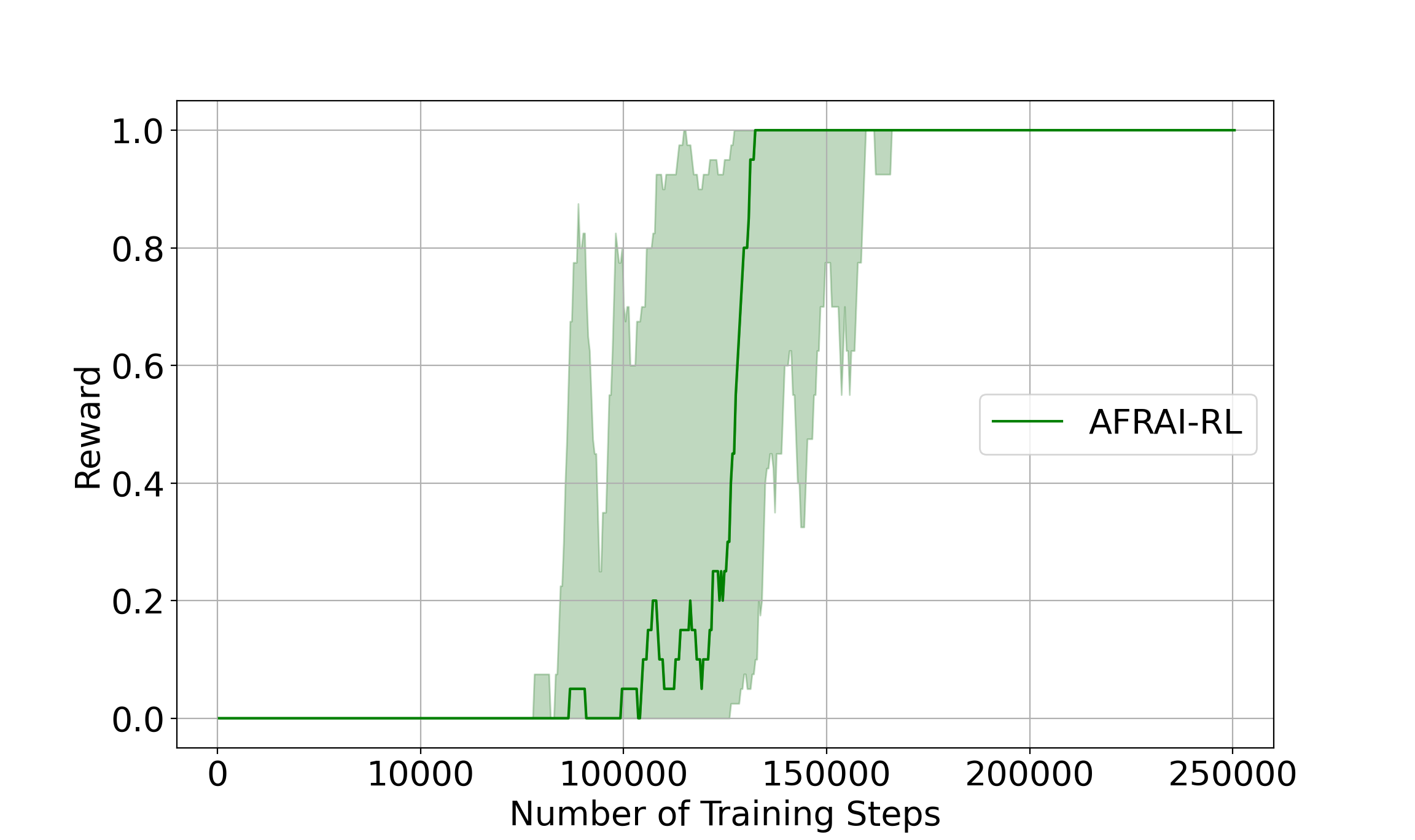}
	%\caption{\newAlgoName}
\end{subfigure}
\begin{subfigure}[b]{\figurespacingCraft\textwidth}  
	\centering
	\includegraphics[width=\textwidth]{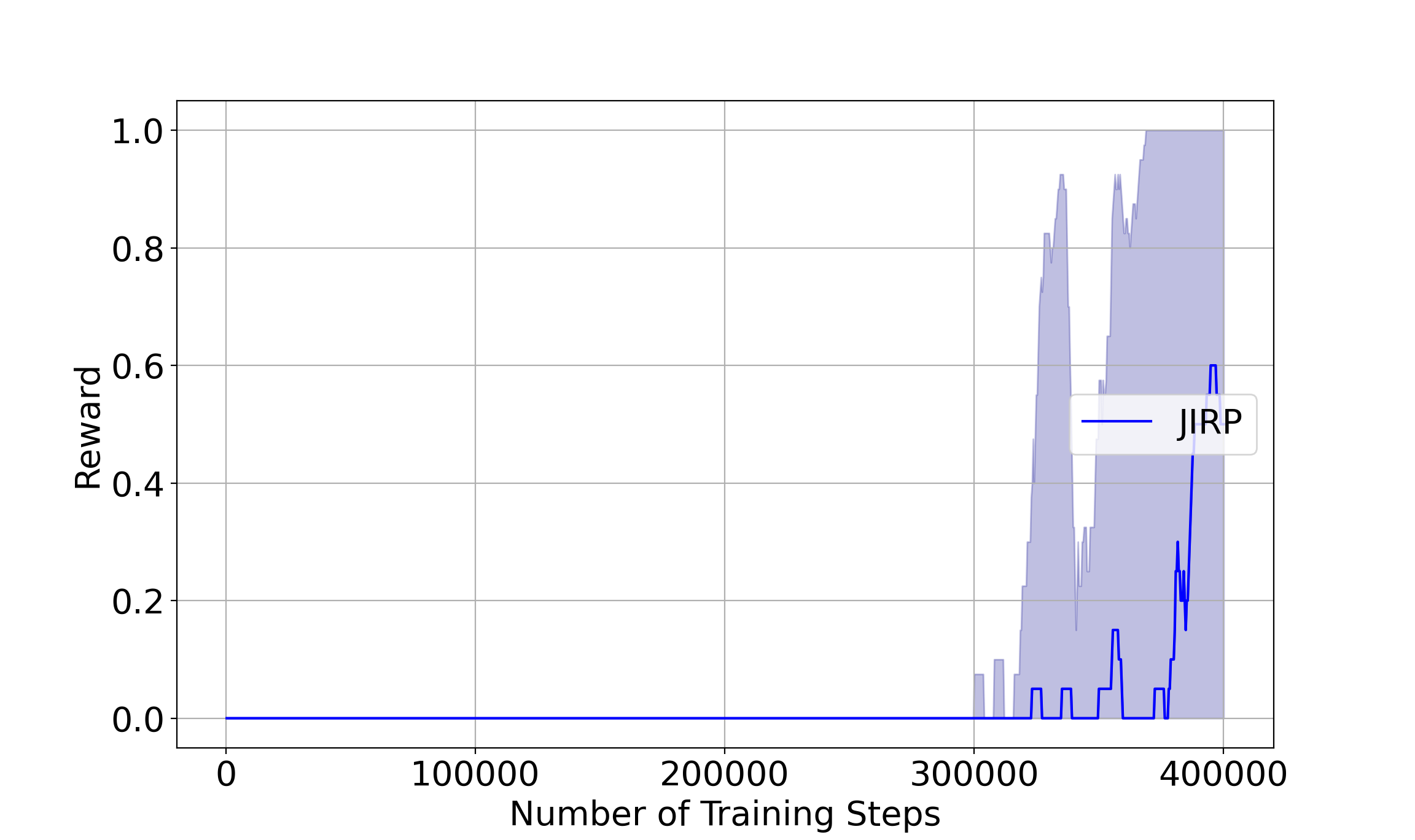}
	%\caption{\methodA-SAT}
	\end{subfigure}
\begin{subfigure}[b]{\figurespacingCraft\textwidth}  
	\centering
	\includegraphics[width=\textwidth]{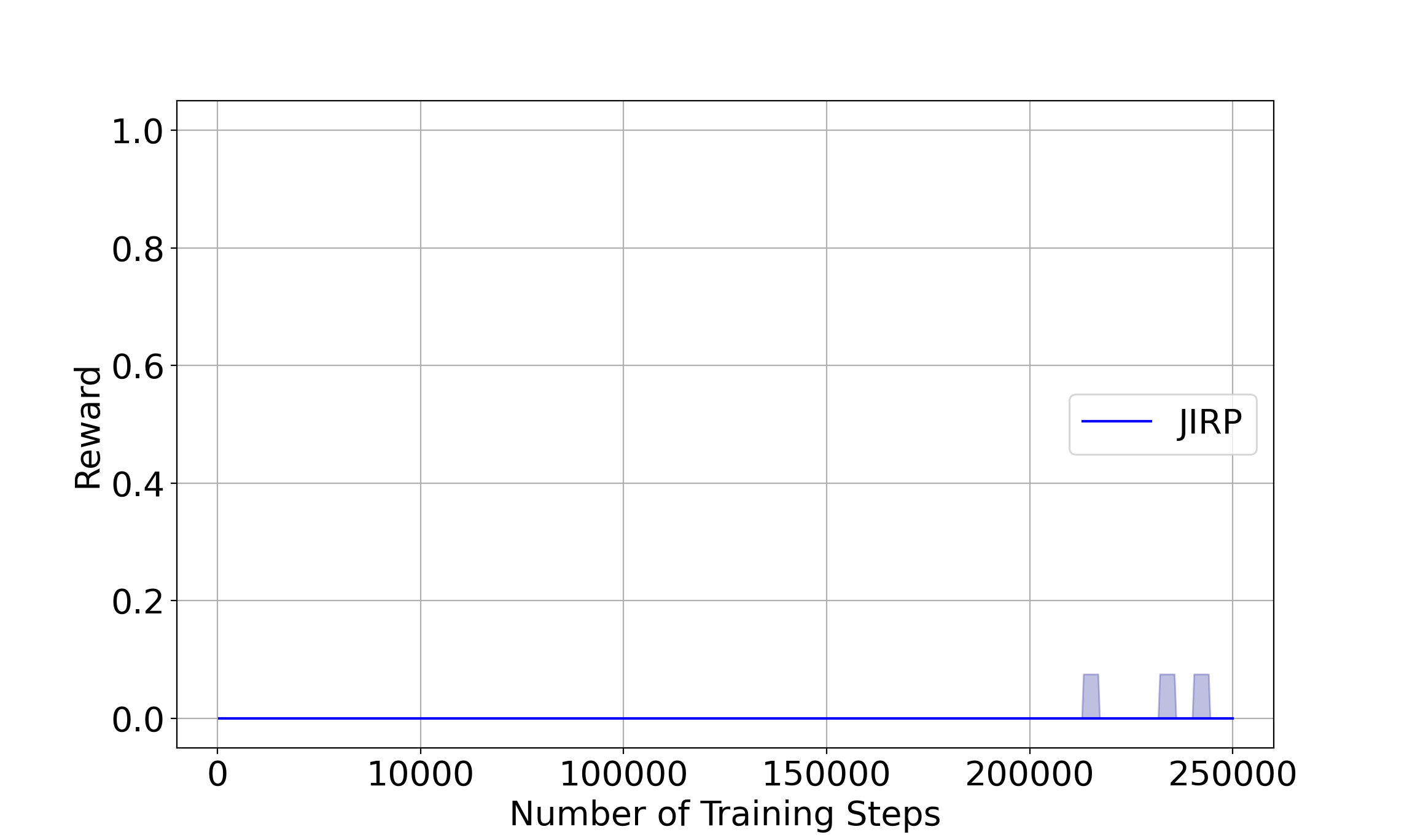}
	%\caption{\methodA-SAT}
	\end{subfigure}
\begin{subfigure}[b]{\figurespacingCraft\textwidth}  
	\centering
	\includegraphics[width=\textwidth]{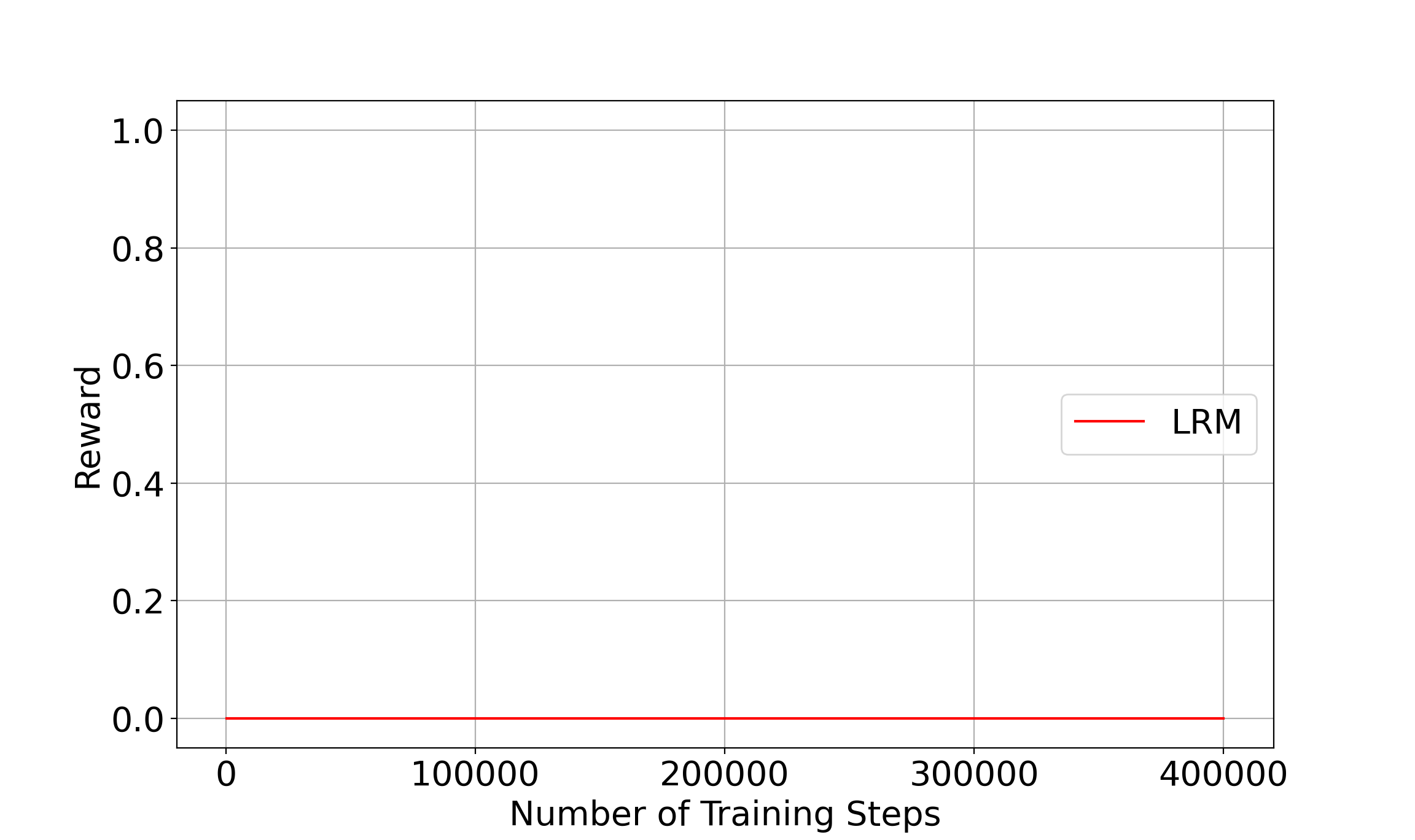}
	%\caption{LRM-QRM}
	\end{subfigure}
\begin{subfigure}[b]{\figurespacingCraft\textwidth}  
	\centering
	\includegraphics[width=\textwidth]{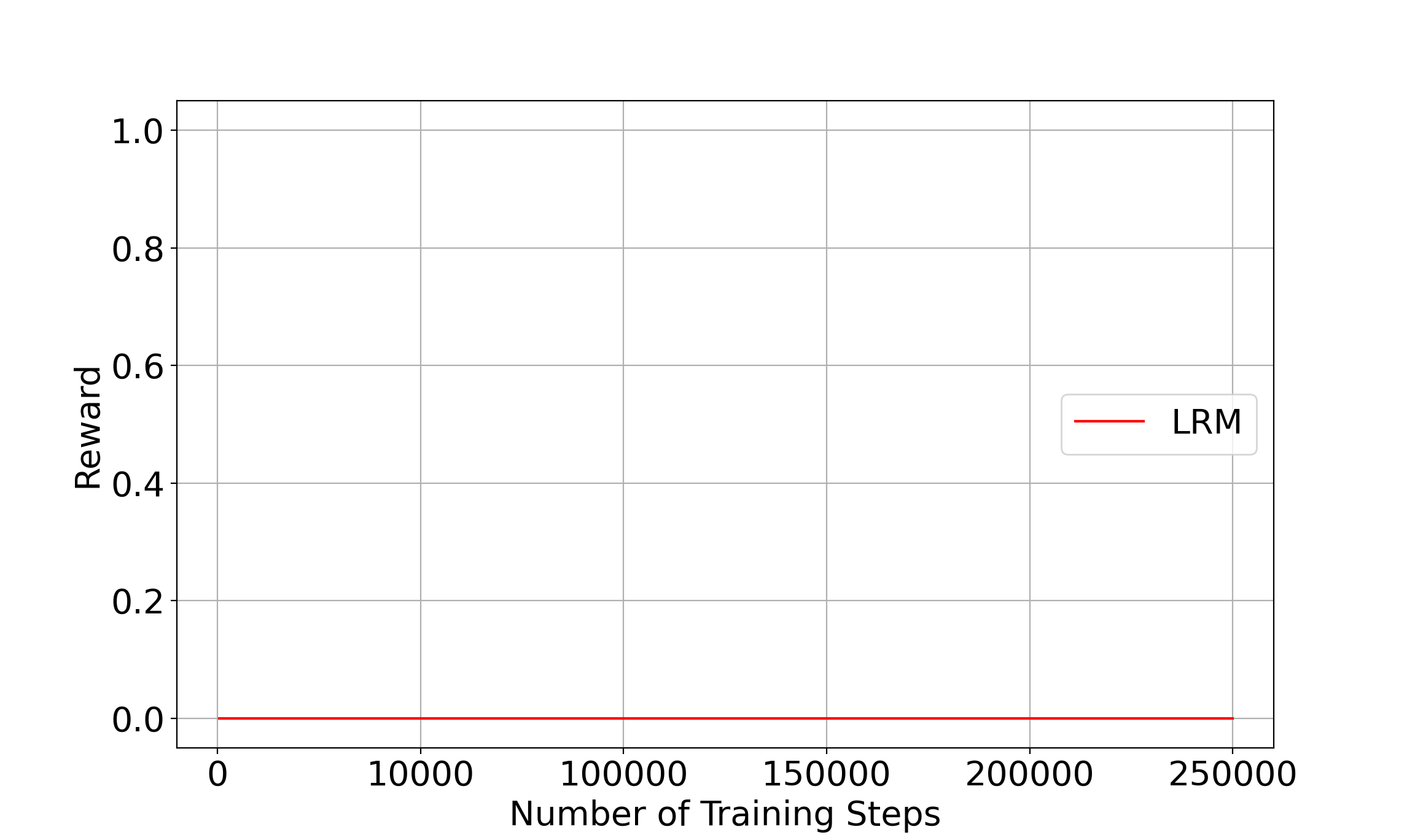}
	%\caption{LRM-QRM}
\end{subfigure}
\begin{subfigure}[b]{\figurespacingCraft\textwidth}  
	\centering
	\includegraphics[width=0.9\textwidth]{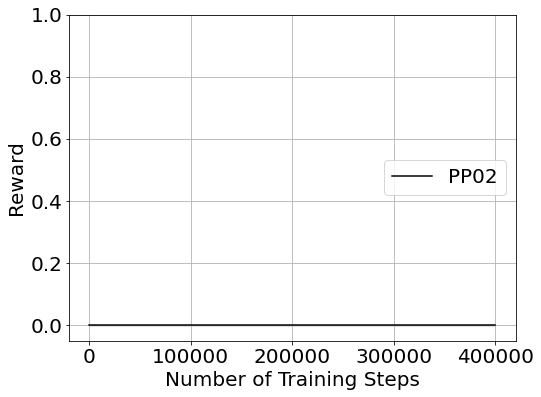}
	\caption{Task 1}
	\end{subfigure}
%new row spear
\begin{subfigure}[b]{\figurespacingCraft\textwidth}  
	\centering
	\includegraphics[width=0.9\textwidth]{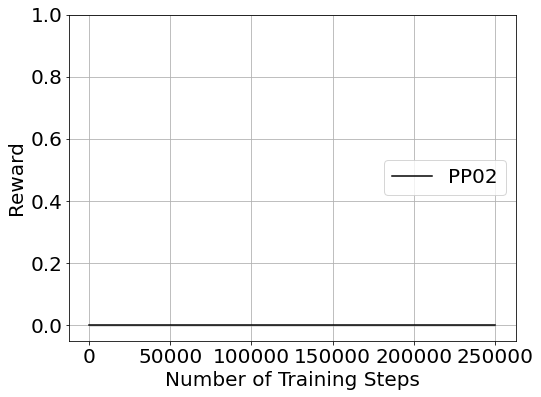}
	\caption{Task 2}
	\end{subfigure}
\caption{Attained rewards of 10 independent simulation runs of the \craft, averaged for every 10 training steps in \newAlgoName~(First row), \methodA-SAT (Second Row), LRM-QRM (Third Row) and PPO2 (Fourth Row): (a) Task 1; (b) Task 2.}  
\label{craft_tasks}
\end{figure*}
Figure \ref{craft_tasks} shows the attained rewards of 10 independent simulation runs for each task, averaged every 10 training steps. For task 1, it can be seen that, on average,
the proposed \newAlgoName\ approach converges to an optimal policy in about 190,000 training steps, while \methodA-SAT, LRM-QRM, and PPO2 do not converge to an optimal policy. For task 2, on average the proposed \newAlgoName\ approach converges to an optimal policy in about 170,000 million training steps, while \methodA-SAT, LRM-QRM, and PPO2 do not converge to an optimal policy.

\section{Conclusions}
% Same state-action sequences lead to same reward values. Current method depends on the correct labeling of the trajectories. 

We propose an active reinforcement learning approach that infers finite reward automata during the RL process through interaction with the environment. The algorithm can actively guide the RL towards finding answers to the queries needed for inferring the finite reward automata, thus making the finite reward automaton learning process more efficient. The case studies show that this algorithm is more efficient than recent baseline algorithms for the environments used. 

This work has potential in multiple future directions. First, we assume that the same state-action sequences lead to the same reward values in this paper (i.e., the method depends on the correct labeling of the trajectories). We will investigate the scenarios where incorrect labels can occur and the same state-action sequences may lead to different reward values. Second, current active RL work usually adopts a model-based framework and search for states that are less visited, thus improving the sample efficiency of RL. Our proposed approach adopts a model-free framework and actively recovers the reward structure, which turns out to be effective in the non-Markov RL scenarios. To make connections with the other work in active RL, we will investigate model-based RL with active finite reward automaton inference. Finally, as we consider RL for a single agent in this paper, we will extend the work to multi-agent settings for collaborative or non-collaborative games.

% The active RL work usually adopt a model-based framework and search for states that are less visited, thus improving the sample efficiency of RL. Our proposed approach adopts a model-free framework and actively recovers the reward structure, which turns out to be more effective in the non-Markov RL scenarios.

  \section{Acknowledgment}
This material is based upon work supported by the Defense Advanced Research Projects Agency (DARPA) under Contract No. HR001120C0032, ARL W911NF2020132, ARL ACC-APG-RTP W911NF, NSF 1646522, and Deutsche Forschungsgemeinschaft (DFG, German Research 
Foundation) - grant no. 434592664. Any
opinions, findings and conclusions or recommendations expressed in this material are those of
the author(s) and do not necessarily reflect the views of DARPA.

\bibliographystyle{IEEEtrans}
\bibliography{references}

\end{document}